\documentclass{article}

\usepackage[preprint,nonatbib]{neurips_2026}

\usepackage[utf8]{inputenc}
\usepackage[T1]{fontenc}
\usepackage{hyperref}
\usepackage{url}
\usepackage{booktabs}
\usepackage{placeins}
\usepackage{amsfonts}
\usepackage{microtype}
\usepackage{xcolor}
\usepackage{amsmath}
\usepackage{enumitem}
\usepackage{float}
\usepackage{amsthm}
\usepackage{makecell}
\usepackage{colortbl}
\usepackage{graphicx}
\usepackage[natbib=true]{biblatex}
\graphicspath{{figures/}}
\newtheorem{definition}{Definition}
\newtheorem{lemma}{Lemma}

\definecolor{posgreen}{RGB}{0,128,0}
\definecolor{negred}{RGB}{200,0,0}
\newcommand{\posgreen}[1]{\textcolor{posgreen}{#1}}
\newcommand{\negred}[1]{\textcolor{negred}{#1}}

\definecolor{linkcolor}{rgb}{0.6, 0.9, 1.0}
\AtBeginDocument{%
  \hypersetup{
    colorlinks=false,
    allbordercolors=linkcolor,
    pdfborderstyle={/S/U/W 0.5},
    allcolors=black
  }
}

\definecolor{customcyan}{RGB}{10, 204, 0}
\definecolor{tealblue}{RGB}{0, 132, 194}
\definecolor{darkorange}{RGB}{220, 100, 0}

\newsavebox{\tabbox}
\newcommand{\fittotextwidth}[1]{%
  \sbox{\tabbox}{#1}%
  \ifdim\wd\tabbox>\linewidth
    \resizebox{\linewidth}{!}{\usebox{\tabbox}}%
  \else
    \usebox{\tabbox}%
  \fi
}

\title{Graph Transductive Sharpening: Leveraging \\ Unlabeled Predictions in Node Classification}

\author{%
  Brown Zaz\thanks{Equal contribution.}\\
  University of Cambridge\\
  \texttt{jb2650@cl.cam.ac.uk} \\
  \And
  Mar Gonzàlez I Català\footnotemark[1]\\
  University of Cambridge\\
  \texttt{mg2211@cl.cam.ac.uk} \\
  \And
  Ferran Hernandez Caralt \\
  University of Cambridge \\
  \texttt{fh455@cl.cam.ac.uk} \\
  \And
  Moshe Eliasof \\
  University of Cambridge \\
  \texttt{me532@cl.cam.ac.uk} \\
  \And
  Pietro Liò \\
  University of Cambridge \\
  \texttt{pl219@cl.cam.ac.uk} \\
}

\begin{document}

\maketitle

\begin{abstract}
In the transductive setting, where the full graph is observed but node labels are only partially available, progress in semi-supervised node classification has largely focused on architectural innovation. In this paper, we revisit an orthogonal axis: the training objective. We start from a simple observation: transductive models produce predictions for every node during training, including nodes without labels. These unlabeled-node predictions may contain useful training signal, but standard supervised objectives discard them because no ground-truth labels are available. Inspired by the decomposition of cross-entropy into a label-dependent alignment term and a label-independent entropy term, we propose prediction confidence as a natural way to extract this signal in the absence of labels. This motivates Transductive Sharpening (TS): a loss-level modification that minimizes prediction entropy on unlabeled nodes while counterbalancing this effect on labeled nodes. We evaluate Transductive Sharpening across a wide range of node-classification benchmarks and observe consistent performance improvements without requiring any changes to the backbone architecture. Code is available at \url{https://github.com/transductive-sharpening/tunedGNN}.
\end{abstract}

\section{Introduction}

Graph neural networks (GNNs) \cite{548916, 1555942, bruna2014spectralnetworkslocallyconnected, defferrard2017convolutionalneuralnetworksgraphs,
kipf2017semisupervisedclassificationgraphconvolutional, gat, hamilton2018inductiverepresentationlearninglarge, gilmer2017neuralmessagepassingquantum} have become the dominant approach for node classification tasks, particularly in the transductive setting \cite{bronstein2021geometric}, where the full graph is observed but only a subset of node labels is available. Over the past years, progress in this area has been driven largely by architectural innovation, with increasingly sophisticated message-passing schemes \cite{dir-poly, dual-former, bunn, gcn-rep,luan2023graph} and transformer-based models \cite{graph-tarif}. In contrast, the design of training objectives has received comparatively little attention, despite its central role in shaping model performance.

In the transductive setting, models produce predictions for all nodes in the graph at every training step, including those without labels, yet the training objective is applied only to labeled nodes, as standard supervised losses require ground-truth labels. However, once a model begins to form reliable and confident predictions, these may themselves provide a useful learning signal for nodes with unknown labels. We build on this observation by leveraging such predictions, encouraging confidence on unlabeled nodes while preventing overconfidence on labeled ones.

We introduce Transductive Sharpening (TS), a simple and elegant loss-level modification that implements this idea. The method introduces a single hyperparameter and can be applied on top of any GNN architecture. Empirically, we show that it consistently improves performance across a wide range of models and benchmarks.

To understand the behavior induced by TS, we study the effect of the sharpening coefficient $\lambda$ and analyze how the objective changes the distribution of predictive confidence across the graph. Empirically, we find that moderate positive values of $\lambda$ yield the most reliable gains, and that TS reallocates confidence toward unlabeled nodes as intended by the objective.

Our results point to a simple but powerful principle: predictions generated during training, typically discarded when labels are unavailable, can be directly leveraged to improve learning. While we study this idea in the context of transductive graph learning, it naturally extends to other settings, suggesting a general avenue for improving learning algorithms without increasing model complexity.

\paragraph{Contributions.}
Our main contributions are as follows:
\begin{itemize}[leftmargin=1.5em]
    \item We introduce \emph{Transductive Sharpening} (TS), a simple architecture-agnostic loss modification that turns predictions on unlabeled nodes into a direct training signal for transductive node classification.
    \item We show that TS provides a strong performance-complexity trade-off: it improves standard GNN and MLP baselines across 13 node-classification benchmarks while adding only a single scalar hyperparameter and requiring no architectural changes.
    \item We study the role of the sharpening coefficient $\lambda$, showing that TS remains effective across a range of positive values, and that a single conservative setting preserves much of the benefit across models and datasets.
\end{itemize}

\section{Background and Setup}

In this section we provide background material related to our work.

\paragraph{Notation.}
We denote by $\Delta^{C-1} = \{p \in \mathbb{R}^C_{\ge 0} : \sum_{i=1}^C p_i = 1\}$ the probability simplex over $C$ classes. Throughout, labels are represented as one-hot vectors $y_v \in \{0,1\}^C$, where $y_{v,i}$ indicates whether node $v$ belongs to class $i$.

Node classification tasks consist of assigning a label to each node in a graph based on its features and the graph structure~\cite{kipf2017semisupervisedclassificationgraphconvolutional}.
In the transductive setting, the full graph and node features are available during training, but labels are observed only for a subset of nodes, and the goal is to predict the rest.

\begin{definition}[Transductive node classification]
Let $G = (V, E)$ be a graph with node features $X \in \mathbb{R}^{|V| \times d}$. Each node $v \in V$ has an associated label $y_v \in \{0,1\}^C$, observed only for a subset $V_L \subset V$ (referred to as \emph{labeled nodes}). We denote by $V_U := V \setminus V_L$ the remaining nodes (referred to as \emph{unlabeled nodes}). The objective is to learn a model that predicts labels for nodes in $V_U$, using the full graph $G$, all node features $X$, and the labels observed on $V_L$.
\end{definition}

\textbf{From standard to augmented training objectives.}
A common approach to transductive node classification is to train a model in a supervised manner on the labeled subset of nodes, and then use it to generate predictions for the unlabeled nodes.

In practice, a model produces, for each node $v \in V$, a logit vector $z_v \in \mathbb{R}^C$ and a corresponding probability distribution
\(
p_v = \mathrm{softmax}(z_v) \in \Delta^{C-1}
\).

Models are trained using cross-entropy loss applied only to labeled nodes:
\begin{equation}
\mathcal{L}_{\mathrm{sup}} = - \sum_{v \in V_L} \sum_{i=1}^C y_{v,i} \log p_{v,i}.
\end{equation}

This objective aligns the model's predictions with ground-truth labels, but ignores the feature-based outputs generated on unlabeled nodes $V_U$, even though these are computed at every training step as the model processes the full graph.

Although the loss cannot be evaluated on unlabeled nodes due to the absence of labels, the model's predictions produced during training may still contain useful information that could be exploited. This suggests augmenting the objective with an additional term defined over unlabeled nodes similarly to \cite{ssl-entropy-minimization}.

\begin{definition}[Augmented transductive objective]
\label{def:transductive-regularization}
Consider a transductive node classification problem with labeled nodes $V_L$ and unlabeled nodes $V_U$. Let $p_v \in \Delta^{C-1}$ denote the predictive distribution for node $v$. An augmented transductive objective is any training objective of the form
\begin{equation}
\mathcal{L}
=
\mathcal{L}_{\mathrm{sup}}
+
f\bigl(\{p_v\}_{v \in V_U}\bigr),
\end{equation}
where $f : (\Delta^{C-1})^{|V_U|} \to \mathbb{R}$ extracts a learning signal from the model's predictions on the unlabeled nodes $V_U$.
\end{definition}

Definition \ref{def:transductive-regularization} highlights the flexibility of this framework: different choices of $f$ induce different ways of extracting learning signals from unlabeled predictions. For instance, in a binary classification problem with a known balanced class distribution, $f$ could penalize deviations from a balanced prediction distribution over the unlabeled nodes during training.

While useful, this example depends on information that may not be available in general. The central question, then, is whether we can choose $f$ in a principled, task-agnostic way that applies broadly across transductive node classification problems.

\textbf{Uncertainty-based learning signals.}
A natural starting point for designing $f$ is to examine the structure of the supervised loss itself. In particular, cross-entropy admits the following decomposition:

\begin{lemma}[Cross-entropy decomposition]
\label{lemma:cross-entropy}
For any target distribution $y$ and prediction $p$, cross-entropy loss can be written as
\begin{equation}
\mathcal{L}_{\mathrm{CE}}(y, p)
=
H(p)
+
\sum_{i=1}^C (p_i - y_i)\log p_i,
\end{equation}
where $H(p) = -\sum_i p_i \log p_i$ denotes the Shannon entropy \cite{shannon_entropy}.
\end{lemma}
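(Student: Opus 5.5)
The identity follows by expanding the definition of cross-entropy and then adding and subtracting the predicted distribution's self-information term. We take the standard definition $\mathcal{L}_{\mathrm{CE}}(y,p) = -\sum_{i=1}^C y_i \log p_i$, consistent with the per-node summand of $\mathcal{L}_{\mathrm{sup}}$. The right-hand side of the lemma involves $\sum_i p_i \log p_i$ and $\sum_i y_i \log p_i$, so the plan is to split the left-hand side so that both appear.

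Concretely, we write
\begin{equation*}
-\sum_{i=1}^C y_i \log p_i = -\sum_{i=1}^C p_i \log p_i + \sum_{i=1}^C p_i \log p_i - \sum_{i=1}^C y_i \log p_i .
\end{equation*}
The first sum is $H(p)$ by definition of the Shannon entropy. The remaining two sums have the common factor $\log p_i$, so combining them termwise gives $\sum_{i=1}^C (p_i - y_i)\log p_i$, which is exactly the claimed decomposition. No property of $y$ is needed beyond its entries being finite. In particular, it need not be one-hot or even a distribution, so the identity holds for any target distribution as stated.

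The only point requiring care is well-definedness when some $p_i = 0$, where $\log p_i = -\infty$. In the paper's setting this does not arise, since $p = \mathrm{softmax}(z)$ has strictly positive entries, and we would note this explicitly. For completeness, for general $p$ in the simplex we adopt the usual convention $0\log 0 = 0$. If some $y_i > 0$ while $p_i = 0$, both sides equal $+\infty$. Otherwise, every index with $p_i = 0$ has $y_i = 0$, so its terms vanish on both sides. The identity then reduces to the finite computation over the support of $p$.

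This case analysis is the main (minor) obstacle; the algebra itself is a single regrouping.
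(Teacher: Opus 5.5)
Your proof is correct and is the paper's argument: add and subtract $\sum_i p_i \log p_i$, identify the first sum as $H(p)$, and combine the remaining two sums termwise. Your treatment of the boundary case $p_i = 0$ goes beyond the paper, which implicitly assumes strictly positive (softmax) predictions; note that your ``otherwise'' branch uses $y_i \ge 0$, so that part relies on $y$ being a distribution, not merely on its entries being finite.
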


The proof of Lemma~\ref{lemma:cross-entropy} is provided in Appendix~\ref{app:proof_cross_entropy}.

The second term depends explicitly on the target labels\footnote{For a bayesian perspective on this approach refer to \cite{ssl-entropy-minimization}}, whereas the entropy term $H(p)$ depends only on the model's predictions. This separation reveals that part of the supervised objective is inherently label-independent, and can therefore be evaluated on any node.

This suggests a principled class of choices for $f$: functions that depend only on the predictive distribution and capture uncertainty-related properties of the model's predictions, as minimizing these should also implicitly minimize the original cross-entropy loss.

In this work, we adopt a simple instantiation of this perspective by explicitly controlling prediction confidence across labeled and unlabeled nodes through the training objective.

\section{Transductive Sharpening for Graph Learning}

In this section, we introduce \emph{Transductive Sharpening} (TS), a loss-level modification for transductive node classification.

\subsection{Transductive Sharpening Objective}

Lemma~\ref{lemma:cross-entropy} motivates a label-free way to use unlabeled nodes: shape the uncertainty of their predictive distributions directly through the objective. We instantiate this principle by adding an uncertainty term over $V_U$ to the supervised loss.

A natural choice for the unlabeled-node term is to encourage low-uncertainty predictions on unlabeled nodes, without otherwise modifying the supervised objective. However, naively minimizing uncertainty everywhere can lead to overconfident and poorly calibrated models. To address this, we introduce a simple symmetric objective that sharpens predictions on unlabeled nodes while counterbalancing this effect on labeled ones.

\begin{definition}[Generic Transductive Sharpening objective]
\label{def:generic-TS}
Let \(R : \Delta^{C-1} \to \mathbb{R}\) be a function on the probability simplex that measures the uncertainty of a predictive distribution. For a model producing, for each node \(v \in V\), a probability vector \(p_v \in \Delta^{C-1}\), we define the Generic Transductive Sharpening objective by
\begin{equation}    
\mathcal{L}_{R}
=
\mathcal{L}_{\mathrm{sup}}
+
\lambda \cdot \frac{1}{|V_U|} \sum_{v \in V_U} R(p_v)
-
\lambda \cdot \frac{1}{|V_L|} \sum_{v \in V_L} R(p_v),
\end{equation}
where \(\lambda \in \mathbb{R}\) controls the influence of the sharpening.
\end{definition}

This formulation captures two complementary effects. On unlabeled nodes, minimizing uncertainty encourages confident predictions \cite{ssl-entropy-minimization}, allowing the model to leverage its own outputs as a learning signal. On labeled nodes, maximizing uncertainty counteracts overconfidence, helping to prevent overfitting to the training data \cite{muller2019does}.

\subsection{Implementing an Uncertainty Function}

The formulation introduced in Definition~\ref{def:generic-TS} depends on the choice of the function \(R\), which determines how predictive confidence is shaped during training. A natural candidate for \(R\) is Shannon entropy \cite{shannon_entropy}, widely used as a measure of uncertainty in probabilistic models.

However, Shannon's logarithmic form yields unbounded gradients near the boundary of the probability simplex, which can lead to overly aggressive updates for confident predictions and, in turn, unstable training dynamics and a tendency toward degenerate one-hot solutions.

To address these limitations, we consider an alternative based on the Tsallis entropy.

\begin{definition}[Tsallis entropy with \(q=2\)]
The Tsallis entropy \cite{Tsallis:1987eu} of order \(q=2\), also known as Gini impurity, is defined as
\begin{equation}
S_2(p) = 1 - \sum_{i=1}^{C} p_i^2.
\end{equation}
\end{definition}

Tsallis entropy is a one-parameter generalization of Shannon entropy that reduces to Shannon entropy when \(q = 1\). For \(q=2\), it preserves the same qualitative behavior, assigning low values to confident predictions and high values to diffuse predictions, but admits a simple quadratic form
\[
S_2(p) = 1 - \|p\|_2^2.
\]
Thus, minimizing \(S_2(p)\) is equivalent to maximizing the squared \(\ell_2\)-norm of the predictive distribution.

We adopt \(R(p) = S_2(p)\) in the Transductive Sharpening objective because the quadratic form gives gradients that are linear in \(p\), leading to stable updates even when predictions are already confident. Substituting \(R(p) = S_2(p)\) into Definition~\ref{def:generic-TS} yields the final form of the Transductive Sharpening objective:

\begin{definition}[Transductive Sharpening Objective]
For a model producing, for each node \(v \in V\), a probability vector \(p_v \in \Delta^{C-1}\), we define the Transductive Sharpening objective by
\begin{equation}    
\mathcal{L}_{\mathrm{TS}}
=
\mathcal{L}_{\mathrm{sup}}
+
\lambda \cdot \frac{1}{|V_U|} \sum_{v \in V_U} \left( 1 - \|p_v\|_2^2 \right)
-
\lambda \cdot \frac{1}{|V_L|} \sum_{v \in V_L} \left( 1 - \|p_v\|_2^2 \right),
\end{equation}
where \(\lambda \in \mathbb{R}\) controls the strength of the sharpening.
\end{definition}

\textbf{On the Choice of Sharpening Coefficient.}
The Transductive Sharpening objective introduces a single scalar hyperparameter $\lambda$ that controls the strength of the unlabeled-node confidence signal relative to the supervised objective. In our main experiments (Section~\ref{sec:node_class}), we select \(\lambda\) by standard validation tuning over a fixed grid and report test performance at the validation-selected value. To assess whether TS depends on precise per-dataset tuning, we also evaluate a universal setting with $\lambda = 0.25$ applied across all datasets and architectures (Section~\ref{sec:lambda-effect}). Notably, TS requires no changes to the backbone architecture and adds negligible computational overhead to existing GNN training pipelines.

\section{Experiments}
\label{sec:results}

In this section, we conduct an extensive set of experiments to demonstrate the effectiveness of TS for node classification in transductive settings. Our experiments seek to address the following questions:

\begin{enumerate}[
    label=(Q\arabic*),
    leftmargin=2.8em,
    labelsep=0.8em,
    itemsep=0.4em,
    align=left
]
    \item Does transductive sharpening consistently improve the performance of GNNs across a broad set of node classification benchmarks?
    \item Does transductive sharpening consistently improve the performance of MLPs across a broad set of node classification benchmarks, and does it substitute the effect of message passing?
    \item How does the choice of sharpening coefficient $\lambda$ affect the performance of TS?
    \item Can a fixed $\lambda$ perform competitively across architectures and datasets?
\end{enumerate}

\paragraph{Baselines.} To isolate the effect of TS, we consider two classes of baselines: (i) \textit{GNNs}, using well-tuned implementations of standard message-passing architectures (GCN, GAT, GraphSAGE) following \cite{tuned-gnn}, and (ii) \textit{MLPs}, which operate on node features alone and do not use graph structure. Full comparison to current competitive methods can be found in Appendix~\ref{app:comparison}.

We provide complete details on the experimental settings and datasets in Appendix~\ref{app:experimental_details}, as well as additional experiments in Appendix~\ref{app:additional_results}.

\subsection{Node Classification Results}
\label{sec:node_class}

\begin{table*}[t]
    \centering
    \caption{Per-cell results, with the GNN baseline shown on the left and the two TS variants to its right.  Each treatment cell shows the value on the top line and the $\Delta$ vs.\ the matching baseline on the bottom.  \posgreen{Green} when positive, \negred{red} when negative; \textbf{bold} when $|\Delta| > \sigma$.}
    \label{tab:results}
    \fittotextwidth{%
        \setlength{\tabcolsep}{2pt}%
        \begin{tabular}{r *{4}{c} c *{4}{c}}
        \toprule
         & \multicolumn{4}{c}{Baseline} & & \multicolumn{4}{c}{Baseline+TS (Ours)} \\ \cmidrule(lr){2-5} \cmidrule(lr){7-10}
        Dataset & MLP & GCN & SAGE & GAT & \hspace{0.15em} & MLP & GCN & SAGE & GAT \\
        \midrule
        Cora & \makecell[tc]{60.96 \\[-0.7ex] {\scriptsize$\pm$\,2.51}} & \makecell[tc]{84.54 \\[-0.7ex] {\scriptsize$\pm$\,0.86}} & \makecell[tc]{83.60 \\[-0.7ex] {\scriptsize$\pm$\,0.58}} & \makecell[tc]{82.40 \\[-0.7ex] {\scriptsize$\pm$\,1.01}} & & \makecell[tc]{64.48\,{\scriptsize$\pm$\,2.83} \\ \posgreen{$+$3.52}} & \makecell[tc]{85.74\,{\scriptsize$\pm$\,0.54} \\ \textbf{\posgreen{$+$1.20}}} & \makecell[tc]{85.28\,{\scriptsize$\pm$\,1.11} \\ \textbf{\posgreen{$+$1.68}}} & \makecell[tc]{84.62\,{\scriptsize$\pm$\,0.89} \\ \textbf{\posgreen{$+$2.22}}} \\
        CiteSeer & \makecell[tc]{56.58 \\[-0.7ex] {\scriptsize$\pm$\,1.14}} & \makecell[tc]{72.68 \\[-0.7ex] {\scriptsize$\pm$\,0.43}} & \makecell[tc]{69.60 \\[-0.7ex] {\scriptsize$\pm$\,0.61}} & \makecell[tc]{71.90 \\[-0.7ex] {\scriptsize$\pm$\,0.25}} & & \makecell[tc]{62.72\,{\scriptsize$\pm$\,3.20} \\ \textbf{\posgreen{$+$6.14}}} & \makecell[tc]{75.18\,{\scriptsize$\pm$\,0.15} \\ \textbf{\posgreen{$+$2.50}}} & \makecell[tc]{74.96\,{\scriptsize$\pm$\,0.24} \\ \textbf{\posgreen{$+$5.36}}} & \makecell[tc]{74.84\,{\scriptsize$\pm$\,0.48} \\ \textbf{\posgreen{$+$2.94}}} \\
        PubMed & \makecell[tc]{68.96 \\[-0.7ex] {\scriptsize$\pm$\,1.16}} & \makecell[tc]{80.70 \\[-0.7ex] {\scriptsize$\pm$\,0.96}} & \makecell[tc]{77.86 \\[-0.7ex] {\scriptsize$\pm$\,1.49}} & \makecell[tc]{79.76 \\[-0.7ex] {\scriptsize$\pm$\,1.21}} & & \makecell[tc]{72.30\,{\scriptsize$\pm$\,1.89} \\ \textbf{\posgreen{$+$3.34}}} & \makecell[tc]{80.74\,{\scriptsize$\pm$\,0.30} \\ \posgreen{$+$0.04}} & \makecell[tc]{79.72\,{\scriptsize$\pm$\,0.69} \\ \textbf{\posgreen{$+$1.86}}} & \makecell[tc]{78.84\,{\scriptsize$\pm$\,0.67} \\ \negred{$-$0.92}} \\
        Computer & \makecell[tc]{82.46 \\[-0.7ex] {\scriptsize$\pm$\,0.45}} & \makecell[tc]{94.12 \\[-0.7ex] {\scriptsize$\pm$\,0.08}} & \makecell[tc]{93.25 \\[-0.7ex] {\scriptsize$\pm$\,0.36}} & \makecell[tc]{93.98 \\[-0.7ex] {\scriptsize$\pm$\,0.22}} & & \makecell[tc]{82.87\,{\scriptsize$\pm$\,0.56} \\ \posgreen{$+$0.41}} & \makecell[tc]{93.98\,{\scriptsize$\pm$\,0.26} \\ \negred{$-$0.14}} & \makecell[tc]{93.43\,{\scriptsize$\pm$\,0.11} \\ \posgreen{$+$0.18}} & \makecell[tc]{93.86\,{\scriptsize$\pm$\,0.13} \\ \negred{$-$0.12}} \\
        Photo & \makecell[tc]{87.57 \\[-0.7ex] {\scriptsize$\pm$\,0.52}} & \makecell[tc]{95.90 \\[-0.7ex] {\scriptsize$\pm$\,0.33}} & \makecell[tc]{96.43 \\[-0.7ex] {\scriptsize$\pm$\,0.27}} & \makecell[tc]{96.69 \\[-0.7ex] {\scriptsize$\pm$\,0.14}} & & \makecell[tc]{87.65\,{\scriptsize$\pm$\,0.22} \\ \posgreen{$+$0.08}} & \makecell[tc]{96.21\,{\scriptsize$\pm$\,0.11} \\ \posgreen{$+$0.31}} & \makecell[tc]{96.51\,{\scriptsize$\pm$\,0.23} \\ \posgreen{$+$0.08}} & \makecell[tc]{96.60\,{\scriptsize$\pm$\,0.07} \\ \negred{$-$0.09}} \\
        CS & \makecell[tc]{91.54 \\[-0.7ex] {\scriptsize$\pm$\,0.20}} & \makecell[tc]{95.88 \\[-0.7ex] {\scriptsize$\pm$\,0.03}} & \makecell[tc]{96.29 \\[-0.7ex] {\scriptsize$\pm$\,0.12}} & \makecell[tc]{96.17 \\[-0.7ex] {\scriptsize$\pm$\,0.02}} & & \makecell[tc]{91.77\,{\scriptsize$\pm$\,0.38} \\ \posgreen{$+$0.23}} & \makecell[tc]{95.89\,{\scriptsize$\pm$\,0.06} \\ \posgreen{$+$0.01}} & \makecell[tc]{96.24\,{\scriptsize$\pm$\,0.10} \\ \negred{$-$0.05}} & \makecell[tc]{96.17\,{\scriptsize$\pm$\,0.02} \\ $+$0.00} \\
        Physics & \makecell[tc]{95.97 \\[-0.7ex] {\scriptsize$\pm$\,0.07}} & \makecell[tc]{97.38 \\[-0.7ex] {\scriptsize$\pm$\,0.06}} & \makecell[tc]{97.25 \\[-0.7ex] {\scriptsize$\pm$\,0.08}} & \makecell[tc]{97.26 \\[-0.7ex] {\scriptsize$\pm$\,0.03}} & & \makecell[tc]{95.98\,{\scriptsize$\pm$\,0.08} \\ \posgreen{$+$0.01}} & \makecell[tc]{97.44\,{\scriptsize$\pm$\,0.14} \\ \posgreen{$+$0.06}} & \makecell[tc]{97.23\,{\scriptsize$\pm$\,0.00} \\ \negred{$-$0.02}} & \makecell[tc]{97.38\,{\scriptsize$\pm$\,0.04} \\ \textbf{\posgreen{$+$0.12}}} \\
        WikiCS & \makecell[tc]{70.96 \\[-0.7ex] {\scriptsize$\pm$\,1.00}} & \makecell[tc]{79.97 \\[-0.7ex] {\scriptsize$\pm$\,0.43}} & \makecell[tc]{80.71 \\[-0.7ex] {\scriptsize$\pm$\,0.19}} & \makecell[tc]{80.92 \\[-0.7ex] {\scriptsize$\pm$\,0.58}} & & \makecell[tc]{72.48\,{\scriptsize$\pm$\,0.83} \\ \textbf{\posgreen{$+$1.52}}} & \makecell[tc]{80.31\,{\scriptsize$\pm$\,0.44} \\ \posgreen{$+$0.34}} & \makecell[tc]{81.10\,{\scriptsize$\pm$\,0.32} \\ \textbf{\posgreen{$+$0.39}}} & \makecell[tc]{81.78\,{\scriptsize$\pm$\,0.22} \\ \textbf{\posgreen{$+$0.86}}} \\
        \midrule
        Squirrel & \makecell[tc]{39.30 \\[-0.7ex] {\scriptsize$\pm$\,0.79}} & \makecell[tc]{43.75 \\[-0.7ex] {\scriptsize$\pm$\,1.91}} & \makecell[tc]{40.48 \\[-0.7ex] {\scriptsize$\pm$\,2.90}} & \makecell[tc]{41.51 \\[-0.7ex] {\scriptsize$\pm$\,2.34}} & & \makecell[tc]{39.39\,{\scriptsize$\pm$\,0.91} \\ \posgreen{$+$0.09}} & \makecell[tc]{44.57\,{\scriptsize$\pm$\,2.04} \\ \posgreen{$+$0.82}} & \makecell[tc]{41.32\,{\scriptsize$\pm$\,2.35} \\ \posgreen{$+$0.84}} & \makecell[tc]{40.36\,{\scriptsize$\pm$\,1.66} \\ \negred{$-$1.15}} \\
        Chameleon & \makecell[tc]{43.86 \\[-0.7ex] {\scriptsize$\pm$\,5.23}} & \makecell[tc]{45.30 \\[-0.7ex] {\scriptsize$\pm$\,2.30}} & \makecell[tc]{44.32 \\[-0.7ex] {\scriptsize$\pm$\,4.55}} & \makecell[tc]{43.07 \\[-0.7ex] {\scriptsize$\pm$\,5.25}} & & \makecell[tc]{43.86\,{\scriptsize$\pm$\,5.23} \\ $+$0.00} & \makecell[tc]{45.27\,{\scriptsize$\pm$\,4.74} \\ \negred{$-$0.03}} & \makecell[tc]{43.32\,{\scriptsize$\pm$\,4.67} \\ \negred{$-$1.00}} & \makecell[tc]{44.52\,{\scriptsize$\pm$\,3.69} \\ \posgreen{$+$1.45}} \\
        Amazon-Rat. & \makecell[tc]{48.85 \\[-0.7ex] {\scriptsize$\pm$\,0.55}} & \makecell[tc]{53.64 \\[-0.7ex] {\scriptsize$\pm$\,0.54}} & \makecell[tc]{55.18 \\[-0.7ex] {\scriptsize$\pm$\,0.93}} & \makecell[tc]{55.09 \\[-0.7ex] {\scriptsize$\pm$\,0.19}} & & \makecell[tc]{49.50\,{\scriptsize$\pm$\,0.27} \\ \textbf{\posgreen{$+$0.65}}} & \makecell[tc]{54.06\,{\scriptsize$\pm$\,0.58} \\ \posgreen{$+$0.42}} & \makecell[tc]{56.72\,{\scriptsize$\pm$\,0.35} \\ \textbf{\posgreen{$+$1.54}}} & \makecell[tc]{55.73\,{\scriptsize$\pm$\,0.31} \\ \textbf{\posgreen{$+$0.64}}} \\
        Roman-Emp. & \makecell[tc]{66.10 \\[-0.7ex] {\scriptsize$\pm$\,0.44}} & \makecell[tc]{91.15 \\[-0.7ex] {\scriptsize$\pm$\,0.20}} & \makecell[tc]{90.50 \\[-0.7ex] {\scriptsize$\pm$\,0.21}} & \makecell[tc]{90.49 \\[-0.7ex] {\scriptsize$\pm$\,0.22}} & & \makecell[tc]{66.12\,{\scriptsize$\pm$\,0.29} \\ \posgreen{$+$0.02}} & \makecell[tc]{91.66\,{\scriptsize$\pm$\,0.20} \\ \textbf{\posgreen{$+$0.51}}} & \makecell[tc]{91.27\,{\scriptsize$\pm$\,0.36} \\ \textbf{\posgreen{$+$0.77}}} & \makecell[tc]{90.93\,{\scriptsize$\pm$\,0.20} \\ \textbf{\posgreen{$+$0.44}}} \\
        Minesweeper & \makecell[tc]{51.06 \\[-0.7ex] {\scriptsize$\pm$\,1.76}} & \makecell[tc]{97.26 \\[-0.7ex] {\scriptsize$\pm$\,0.22}} & \makecell[tc]{97.09 \\[-0.7ex] {\scriptsize$\pm$\,1.00}} & \makecell[tc]{97.86 \\[-0.7ex] {\scriptsize$\pm$\,0.37}} & & \makecell[tc]{50.97\,{\scriptsize$\pm$\,1.56} \\ \negred{$-$0.09}} & \makecell[tc]{97.80\,{\scriptsize$\pm$\,0.20} \\ \textbf{\posgreen{$+$0.54}}} & \makecell[tc]{97.33\,{\scriptsize$\pm$\,0.94} \\ \posgreen{$+$0.24}} & \makecell[tc]{97.86\,{\scriptsize$\pm$\,0.37} \\ $+$0.00} \\
        \bottomrule
        \end{tabular}
    }
\end{table*}

Table~\ref{tab:results} reports test accuracy for each dataset--model pair, comparing the supervised baseline with the corresponding TS-augmented model across 13 different datasets. Our key takeaways are as follows:
\begin{enumerate}[
    label=(A\arabic*),
    leftmargin=2.8em,
    labelsep=0.8em,
    itemsep=0.4em,
    align=left
]
    \item \textbf{TS improves standard GNN training.} Across GCN, GraphSAGE, and GAT, adding TS generally matches or improves the corresponding supervised baseline. This supports the view that unlabeled-node predictions contain recoverable training signal, as formalized by Definition~\ref{def:transductive-regularization}, and that uncertainty provides an effective way to recover it, as suggested by Lemma~\ref{lemma:cross-entropy}.

    \item \textbf{TS does not replace message passing.} TS also improves MLPs on several datasets, showing that the sharpening signal is not specific to GNN architectures. However, MLP+TS remains below the corresponding GNN performance, indicating that TS does not replace the benefit of message passing.
\end{enumerate}

Overall, these results demonstrate that TS provides a simple and broadly applicable improvement to transductive node classification.

\subsection{On the Effect of the Sharpening Coefficient}
\label{sec:lambda-effect}

To compare $\lambda$ values across datasets with different baseline accuracies and variances, we report improvements using Glass's $\Delta$,
\(
    \Delta_{\mathrm{Glass}}(\lambda)
    =
    \frac{\mathrm{Acc}_{\lambda} - \mathrm{Acc}_{0}}
    {\sigma_{0}},
\)
where $\mathrm{Acc}_{0}$ and $\sigma_{0}$ are the mean and standard deviation of the corresponding $\lambda=0$ supervised baseline. This normalization measures each gain in units of the baseline variability and is preferable to raw accuracy differences in this case because the datasets differ substantially in both difficulty and noise.

Figure~\ref{fig:lambda_glass_overlay} aggregates the Glass-normalized gains over the 13 datasets for each GNN backbone. The median curves remain close to or above zero for small positive values of $\lambda$, with the most stable region lying roughly between $\lambda=0$ and $\lambda=0.5$. Beyond this range, the curves gradually deteriorate, and large values become harmful more often.

\begin{figure}[H]
    \centering
    \includegraphics[width=0.5\textwidth]{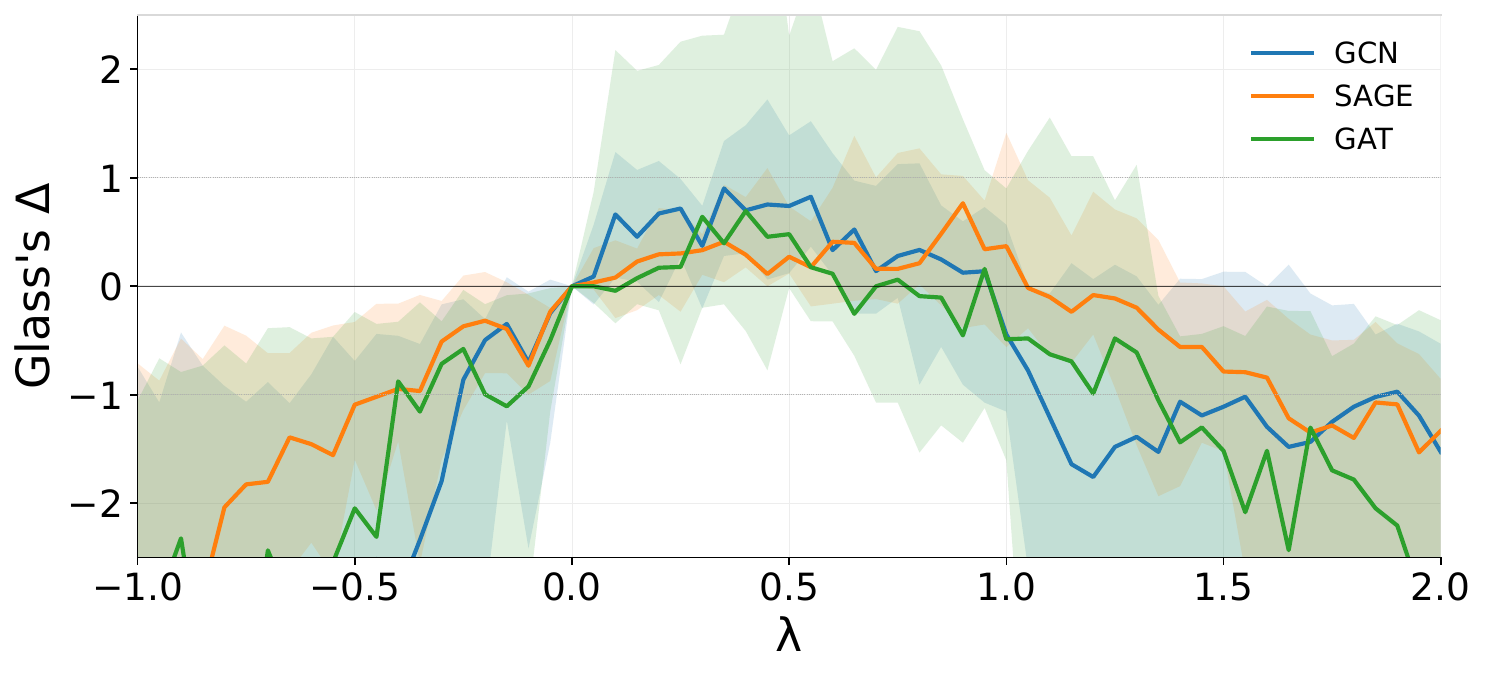}
    \caption{Glass's $\Delta$ on test accuracy vs.\ $\lambda$, where each curve aggregates one GNN backbone over its 13 datasets. Solid line denotes the median and the shaded band denotes the interquartile range. This figure showcases the finding that only using $\lambda \in [0,0.5]$ will generally improve performance on any graph. Additional per-\(\lambda\) visualizations, including the full distribution of improvements and regressions and the corresponding per-dataset accuracy curves, are provided in Appendix~\ref{app:further-visualizations}.}
    \label{fig:lambda_glass_overlay}
\end{figure}

\begin{enumerate}[
    label=(A3),
    leftmargin=2.8em,
    labelsep=0.8em,
    itemsep=0.4em,
    align=left
]
    \item \textbf{Effect of $\lambda$:}
    Across datasets and backbones, Figure~\ref{fig:lambda_glass_overlay} shows a broadly consistent relationship between $\lambda$ and accuracy: performance typically improves for small positive values, reaches a plateau or local maximum at moderate sharpening strength, and then degrades when $\lambda$ becomes too large. This behavior matches the intuition behind TS: a mild sharpening signal can help the model exploit reliable unlabeled-node predictions, whereas excessive sharpening may force the model to commit too strongly to incorrect predictions.
\end{enumerate}

The consistency of the pattern observed in Figure~\ref{fig:lambda_glass_overlay} suggests that the useful range of $\lambda$ is not entirely dataset-specific. We therefore study whether a single coefficient can work reasonably well across datasets and architectures. Based on Figure~\ref{fig:lambda_glass_overlay}, we choose the midpoint of the stable region, $\lambda=0.25$, as a simple universal setting.

\begin{table}[h]
    \centering
    \caption{Per-dataset results for TS at $\lambda{=}0.25$ (universal). Each cell shows test accuracy $\pm$ std on top and the $\Delta$ vs.\ the matching baseline below. \posgreen{Green} when positive, \negred{red} when negative; \textbf{bold} when $|\Delta| > \sigma$. We observe that TS, even without tuning $\lambda$ as a hyperparameter, significantly enhances the performance of classic baselines.}
    \label{tab:lambda025}
    \fittotextwidth{%
        \setlength{\tabcolsep}{3pt}%
        \begin{tabular}{l ccccccccccccc}
        \toprule
         & Cora & CiteSeer & PubMed & Computer & Photo & CS & Physics & WikiCS & Squirrel & Chameleon & Amazon-Rat. & Roman-Emp. & Minesweeper \\
         \cmidrule(lr){2-9} \cmidrule(lr){10-14}
        GCN & \makecell[tc]{85.16 \\[-0.7ex] {\scriptsize$\pm$\,0.36} \\ \posgreen{$+$0.62}} & \makecell[tc]{75.10 \\[-0.7ex] {\scriptsize$\pm$\,0.19} \\ \textbf{\posgreen{$+$2.42}}} & \makecell[tc]{80.88 \\[-0.7ex] {\scriptsize$\pm$\,0.31} \\ \posgreen{$+$0.18}} & \makecell[tc]{93.81 \\[-0.7ex] {\scriptsize$\pm$\,0.22} \\ \textbf{\negred{$-$0.31}}} & \makecell[tc]{96.32 \\[-0.7ex] {\scriptsize$\pm$\,0.08} \\ \textbf{\posgreen{$+$0.42}}} & \makecell[tc]{95.90 \\[-0.7ex] {\scriptsize$\pm$\,0.03} \\ \posgreen{$+$0.02}} & \makecell[tc]{97.44 \\[-0.7ex] {\scriptsize$\pm$\,0.14} \\ \posgreen{$+$0.06}} & \makecell[tc]{80.17 \\[-0.7ex] {\scriptsize$\pm$\,0.48} \\ \posgreen{$+$0.20}} & \makecell[tc]{44.25 \\[-0.7ex] {\scriptsize$\pm$\,2.04} \\ \posgreen{$+$0.50}} & \makecell[tc]{45.27 \\[-0.7ex] {\scriptsize$\pm$\,4.74} \\ \negred{$-$0.03}} & \makecell[tc]{54.11 \\[-0.7ex] {\scriptsize$\pm$\,0.34} \\ \posgreen{$+$0.47}} & \makecell[tc]{91.57 \\[-0.7ex] {\scriptsize$\pm$\,0.20} \\ \textbf{\posgreen{$+$0.42}}} & \makecell[tc]{97.44 \\[-0.7ex] {\scriptsize$\pm$\,0.17} \\ \posgreen{$+$0.18}} \\
        SAGE & \makecell[tc]{84.94 \\[-0.7ex] {\scriptsize$\pm$\,0.89} \\ \textbf{\posgreen{$+$1.34}}} & \makecell[tc]{72.00 \\[-0.7ex] {\scriptsize$\pm$\,1.36} \\ \textbf{\posgreen{$+$2.40}}} & \makecell[tc]{78.40 \\[-0.7ex] {\scriptsize$\pm$\,1.12} \\ \posgreen{$+$0.54}} & \makecell[tc]{93.51 \\[-0.7ex] {\scriptsize$\pm$\,0.06} \\ \posgreen{$+$0.26}} & \makecell[tc]{96.41 \\[-0.7ex] {\scriptsize$\pm$\,0.41} \\ \negred{$-$0.02}} & \makecell[tc]{96.20 \\[-0.7ex] {\scriptsize$\pm$\,0.06} \\ \negred{$-$0.09}} & \makecell[tc]{97.23 \\[-0.7ex] {\scriptsize$\pm$\,0.00} \\ \negred{$-$0.02}} & \makecell[tc]{80.77 \\[-0.7ex] {\scriptsize$\pm$\,0.14} \\ \posgreen{$+$0.06}} & \makecell[tc]{39.77 \\[-0.7ex] {\scriptsize$\pm$\,2.13} \\ \negred{$-$0.71}} & \makecell[tc]{42.93 \\[-0.7ex] {\scriptsize$\pm$\,5.67} \\ \negred{$-$1.39}} & \makecell[tc]{55.69 \\[-0.7ex] {\scriptsize$\pm$\,0.23} \\ \posgreen{$+$0.51}} & \makecell[tc]{91.21 \\[-0.7ex] {\scriptsize$\pm$\,0.25} \\ \textbf{\posgreen{$+$0.71}}} & \makecell[tc]{96.96 \\[-0.7ex] {\scriptsize$\pm$\,0.47} \\ \negred{$-$0.13}} \\
        GAT & \makecell[tc]{83.70 \\[-0.7ex] {\scriptsize$\pm$\,1.19} \\ \posgreen{$+$1.30}} & \makecell[tc]{74.80 \\[-0.7ex] {\scriptsize$\pm$\,0.81} \\ \textbf{\posgreen{$+$2.90}}} & \makecell[tc]{79.80 \\[-0.7ex] {\scriptsize$\pm$\,0.93} \\ \posgreen{$+$0.04}} & \makecell[tc]{93.82 \\[-0.7ex] {\scriptsize$\pm$\,0.13} \\ \negred{$-$0.16}} & \makecell[tc]{96.47 \\[-0.7ex] {\scriptsize$\pm$\,0.00} \\ \textbf{\negred{$-$0.22}}} & \makecell[tc]{96.19 \\[-0.7ex] {\scriptsize$\pm$\,0.11} \\ \posgreen{$+$0.02}} & \makecell[tc]{97.32 \\[-0.7ex] {\scriptsize$\pm$\,0.06} \\ \posgreen{$+$0.06}} & \makecell[tc]{81.03 \\[-0.7ex] {\scriptsize$\pm$\,0.92} \\ \posgreen{$+$0.11}} & \makecell[tc]{39.53 \\[-0.7ex] {\scriptsize$\pm$\,2.12} \\ \negred{$-$1.98}} & \makecell[tc]{43.87 \\[-0.7ex] {\scriptsize$\pm$\,5.20} \\ \posgreen{$+$0.80}} & \makecell[tc]{55.59 \\[-0.7ex] {\scriptsize$\pm$\,0.27} \\ \textbf{\posgreen{$+$0.50}}} & \makecell[tc]{90.99 \\[-0.7ex] {\scriptsize$\pm$\,0.11} \\ \textbf{\posgreen{$+$0.50}}} & \makecell[tc]{97.10 \\[-0.7ex] {\scriptsize$\pm$\,0.69} \\ \negred{$-$0.76}} \\
        \bottomrule
        \end{tabular}
    }
\end{table}

\begin{enumerate}[
    label=(A4),
    leftmargin=2.8em,
    labelsep=0.8em,
    itemsep=0.4em,
    align=left
]
    \item \textbf{Universal $\lambda$:}
    Table~\ref{tab:lambda025} shows that despite removing per-dataset tuning, the universal setting preserves much of the benefit of TS: it improves many model-dataset pairs and rarely causes large regressions. This suggests that TS is not overly sensitive to precise coefficient selection, provided that $\lambda$ is chosen in a conservative positive range.
\end{enumerate}

\subsection{Ablation Studies}

We evaluate the impact of key design choices in TS through a series of ablations.

\paragraph{Removing the labeled-node correction.} At $\lambda = 0.25$, we evaluate a variant that applies the entropy-minimization term to unlabeled nodes while removing the entropy-maximization term on labeled training nodes.
Table~\ref{tab:l025_lt0_changes} shows that removing this correction generally does not improve performance and can substantially hurt accuracy on several datasets, suggesting that the labeled-node term is important for the stability of TS.

\begin{table}[H]
    \centering
    \caption{Test-accuracy gain of dropping the labelled-side entropy-max term ($\lambda_L{=}0$) over the symmetric default ($\lambda_L{=}{-}\lambda_U$).  \posgreen{Green} when positive, \negred{red} when negative; \textbf{bold} when $|\Delta| > \sigma$. Results show that removing the labeled-node correction performs worse than the symmetric TS objective.}
    \label{tab:l025_lt0_changes}
    \fittotextwidth{%
        \setlength{\tabcolsep}{3pt}%
        \begin{tabular}{l ccccccccccccc}
        \toprule
         & Cora & CiteSeer & PubMed & Computer & Photo & CS & Physics & WikiCS & Squirrel & Chameleon & Amazon-Rat. & Roman-Emp. & Minesweeper \\
         \cmidrule(lr){2-9} \cmidrule(lr){10-14}
        GCN & \makecell[tc]{\textbf{\negred{$-$3.80}} \\[-0.7ex] {\scriptsize$\pm$\,0.97}} & \makecell[tc]{\textbf{\negred{$-$2.96}} \\[-0.7ex] {\scriptsize$\pm$\,0.27}} & \makecell[tc]{\textbf{\negred{$-$3.02}} \\[-0.7ex] {\scriptsize$\pm$\,0.61}} & \makecell[tc]{\posgreen{$+$0.07} \\[-0.7ex] {\scriptsize$\pm$\,0.31}} & \makecell[tc]{\negred{$-$0.13} \\[-0.7ex] {\scriptsize$\pm$\,0.20}} & \makecell[tc]{\posgreen{$+$0.06} \\[-0.7ex] {\scriptsize$\pm$\,0.10}} & \makecell[tc]{\posgreen{$+$0.03} \\[-0.7ex] {\scriptsize$\pm$\,0.16}} & \makecell[tc]{\negred{$-$0.10} \\[-0.7ex] {\scriptsize$\pm$\,0.62}} & \makecell[tc]{\negred{$-$0.54} \\[-0.7ex] {\scriptsize$\pm$\,3.16}} & \makecell[tc]{\negred{$-$1.08} \\[-0.7ex] {\scriptsize$\pm$\,6.54}} & \makecell[tc]{\posgreen{$+$0.01} \\[-0.7ex] {\scriptsize$\pm$\,0.58}} & \makecell[tc]{\negred{$-$0.05} \\[-0.7ex] {\scriptsize$\pm$\,0.24}} & \makecell[tc]{\negred{$-$0.02} \\[-0.7ex] {\scriptsize$\pm$\,0.29}} \\
        SAGE & \makecell[tc]{\textbf{\negred{$-$3.94}} \\[-0.7ex] {\scriptsize$\pm$\,1.27}} & \makecell[tc]{\negred{$-$1.12} \\[-0.7ex] {\scriptsize$\pm$\,1.74}} & \makecell[tc]{\textbf{\posgreen{$+$1.68}} \\[-0.7ex] {\scriptsize$\pm$\,1.24}} & \makecell[tc]{\posgreen{$+$0.10} \\[-0.7ex] {\scriptsize$\pm$\,0.18}} & \makecell[tc]{\posgreen{$+$0.15} \\[-0.7ex] {\scriptsize$\pm$\,0.54}} & \makecell[tc]{\posgreen{$+$0.05} \\[-0.7ex] {\scriptsize$\pm$\,0.08}} & \makecell[tc]{0 \\[-0.7ex] {\scriptsize$\pm$\,0.01}} & \makecell[tc]{\negred{$-$0.31} \\[-0.7ex] {\scriptsize$\pm$\,0.49}} & \makecell[tc]{\posgreen{$+$0.79} \\[-0.7ex] {\scriptsize$\pm$\,3.32}} & \makecell[tc]{\negred{$-$0.19} \\[-0.7ex] {\scriptsize$\pm$\,7.46}} & \makecell[tc]{\negred{$-$0.33} \\[-0.7ex] {\scriptsize$\pm$\,0.68}} & \makecell[tc]{\negred{$-$0.17} \\[-0.7ex] {\scriptsize$\pm$\,0.47}} & \makecell[tc]{\posgreen{$+$0.22} \\[-0.7ex] {\scriptsize$\pm$\,0.54}} \\
        GAT & \makecell[tc]{\textbf{\negred{$-$2.74}} \\[-0.7ex] {\scriptsize$\pm$\,1.41}} & \makecell[tc]{\textbf{\negred{$-$3.30}} \\[-0.7ex] {\scriptsize$\pm$\,1.11}} & \makecell[tc]{\textbf{\negred{$-$1.92}} \\[-0.7ex] {\scriptsize$\pm$\,1.36}} & \makecell[tc]{\posgreen{$+$0.16} \\[-0.7ex] {\scriptsize$\pm$\,0.19}} & \makecell[tc]{\negred{$-$0.04} \\[-0.7ex] {\scriptsize$\pm$\,0.14}} & \makecell[tc]{\posgreen{$+$0.05} \\[-0.7ex] {\scriptsize$\pm$\,0.15}} & \makecell[tc]{0 \\[-0.7ex] {\scriptsize$\pm$\,0.09}} & \makecell[tc]{\posgreen{$+$0.19} \\[-0.7ex] {\scriptsize$\pm$\,1.22}} & \makecell[tc]{\negred{$-$0.02} \\[-0.7ex] {\scriptsize$\pm$\,3.17}} & \makecell[tc]{\negred{$-$0.02} \\[-0.7ex] {\scriptsize$\pm$\,6.80}} & \makecell[tc]{\posgreen{$+$0.26} \\[-0.7ex] {\scriptsize$\pm$\,0.43}} & \makecell[tc]{\negred{$-$0.08} \\[-0.7ex] {\scriptsize$\pm$\,0.44}} & \makecell[tc]{\posgreen{$+$0.17} \\[-0.7ex] {\scriptsize$\pm$\,0.75}} \\
        \bottomrule
        \end{tabular}
    }
\end{table}

\paragraph{Symmetry of $\lambda$.}
We test whether the symmetric formulation ($\lambda_u = -\lambda_t$) is essential by introducing an offset \(\{-0.10, -0.05, +0, +0.05, +0.10\}\) while keeping $\lambda_u - \lambda_t$ fixed.
Table~\ref{tab:offset_changes} shows that asymmetric configurations do not yield systematic improvements, and often slightly degrade performance, suggesting that the balance between sharpening unlabeled nodes and regularizing labeled ones is important for stable gains.
\begin{table}[H]
    \centering
    \caption{Best test-accuracy gain from the non-symmetric offset, relative to the symmetric default $\lambda_u = -\lambda_t$.  \posgreen{Green} when positive, \negred{red} when negative; \textbf{bold} when $|\Delta| > \sigma$. This table shows the symmetric approach outperforms adding non-symmetry to the Generic Transductive Sharpening Objective (Definition~\ref{def:generic-TS}).}
    \label{tab:offset_changes}
    \fittotextwidth{%
        \setlength{\tabcolsep}{3pt}%
        \begin{tabular}{l ccccccccccccc}
        \toprule
         & Cora & CiteSeer & PubMed & Computer & Photo & CS & Physics & WikiCS & Squirrel & Chameleon & Amazon-Rat. & Roman-Emp. & Minesweeper \\
         \cmidrule(lr){2-9} \cmidrule(lr){10-14}
        GCN & \makecell[tc]{\negred{$-$0.24} \\[-0.7ex] {\scriptsize$\pm$\,0.49}} & \makecell[tc]{\negred{$-$0.10} \\[-0.7ex] {\scriptsize$\pm$\,0.67}} & \makecell[tc]{\negred{$-$0.20} \\[-0.7ex] {\scriptsize$\pm$\,0.94}} & 0 & \makecell[tc]{\negred{$-$0.26} \\[-0.7ex] {\scriptsize$\pm$\,0.36}} & \makecell[tc]{\negred{$-$0.07} \\[-0.7ex] {\scriptsize$\pm$\,0.11}} & \makecell[tc]{\negred{$-$0.05} \\[-0.7ex] {\scriptsize$\pm$\,0.18}} & \makecell[tc]{\negred{$-$0.25} \\[-0.7ex] {\scriptsize$\pm$\,0.71}} & \makecell[tc]{\negred{$-$0.57} \\[-0.7ex] {\scriptsize$\pm$\,2.82}} & 0 & \makecell[tc]{\negred{$-$0.03} \\[-0.7ex] {\scriptsize$\pm$\,0.34}} & \makecell[tc]{\negred{$-$0.17} \\[-0.7ex] {\scriptsize$\pm$\,0.20}} & 0 \\
        SAGE & \makecell[tc]{0 \\[-0.7ex] {\scriptsize$\pm$\,1.35}} & 0 & \makecell[tc]{\negred{$-$0.84} \\[-0.7ex] {\scriptsize$\pm$\,0.85}} & 0 & 0 & \makecell[tc]{\negred{$-$0.05} \\[-0.7ex] {\scriptsize$\pm$\,0.20}} & \makecell[tc]{0 \\[-0.7ex] {\scriptsize$\pm$\,0.06}} & \makecell[tc]{\negred{$-$0.12} \\[-0.7ex] {\scriptsize$\pm$\,0.43}} & \makecell[tc]{\negred{$-$1.29} \\[-0.7ex] {\scriptsize$\pm$\,3.10}} & 0 & \makecell[tc]{\posgreen{$+$0.14} \\[-0.7ex] {\scriptsize$\pm$\,0.54}} & \makecell[tc]{\negred{$-$0.07} \\[-0.7ex] {\scriptsize$\pm$\,0.21}} & 0 \\
        GAT & 0 & 0 & \makecell[tc]{\negred{$-$0.26} \\[-0.7ex] {\scriptsize$\pm$\,1.19}} & 0 & \makecell[tc]{\textbf{\negred{$-$0.31}} \\[-0.7ex] {\scriptsize$\pm$\,0.08}} & \makecell[tc]{\textbf{\negred{$-$0.08}} \\[-0.7ex] {\scriptsize$\pm$\,0.05}} & \makecell[tc]{\negred{$-$0.07} \\[-0.7ex] {\scriptsize$\pm$\,0.13}} & \makecell[tc]{\negred{$-$0.23} \\[-0.7ex] {\scriptsize$\pm$\,0.24}} & \makecell[tc]{\negred{$-$0.69} \\[-0.7ex] {\scriptsize$\pm$\,2.73}} & \makecell[tc]{\negred{$-$1.07} \\[-0.7ex] {\scriptsize$\pm$\,6.04}} & \makecell[tc]{\negred{$-$0.28} \\[-0.7ex] {\scriptsize$\pm$\,0.35}} & \makecell[tc]{\posgreen{$+$0.08} \\[-0.7ex] {\scriptsize$\pm$\,0.43}} & 0 \\
        \bottomrule
        \end{tabular}
    }
\end{table}

\paragraph{Choice of entropy.}
We compare Tsallis entropy ($q=2$) with Shannon entropy ($q=1$) at a fixed $\lambda=0.25$. While average accuracy remains similar, Shannon entropy leads to higher variance and instability in some cases (see GAT$+$Roman-Empire cell). This confirms that the quadratic form of Tsallis entropy provides more stable optimization dynamics and is a safer default.
\begin{table}[H]
    \centering
    \caption{Test-accuracy gain of Shannon entropy ($q{=}1$) over Gini ($q{=}2$) at $\lambda{=}0.25$.  \posgreen{Green} when positive, \negred{red} when negative; \textbf{bold} when $|\Delta| > \sigma$. These results show the choice of entropy generally does not affect the reported accuracy. Thus motivating the choice of Tsallis entropy for its simplicity and its better differentiability properties.}
    \label{tab:shannon_changes}
    \fittotextwidth{%
        \setlength{\tabcolsep}{3pt}%
        \begin{tabular}{l ccccccccccccc}
        \toprule
         & Cora & CiteSeer & PubMed & Computer & Photo & CS & Physics & WikiCS & Squirrel & Chameleon & Amazon-Rat. & Roman-Emp. & Minesweeper \\
         \cmidrule(lr){2-9} \cmidrule(lr){10-14}
        GCN & \makecell[tc]{\negred{$-$0.14} \\[-0.7ex] {\scriptsize$\pm$\,0.77}} & \makecell[tc]{\negred{$-$0.12} \\[-0.7ex] {\scriptsize$\pm$\,0.38}} & \makecell[tc]{\negred{$-$0.16} \\[-0.7ex] {\scriptsize$\pm$\,0.57}} & \makecell[tc]{\posgreen{$+$0.21} \\[-0.7ex] {\scriptsize$\pm$\,0.24}} & \makecell[tc]{\textbf{\negred{$-$0.15}} \\[-0.7ex] {\scriptsize$\pm$\,0.13}} & \makecell[tc]{0 \\[-0.7ex] {\scriptsize$\pm$\,0.09}} & \makecell[tc]{\posgreen{$+$0.01} \\[-0.7ex] {\scriptsize$\pm$\,0.20}} & \makecell[tc]{\negred{$-$0.02} \\[-0.7ex] {\scriptsize$\pm$\,0.66}} & \makecell[tc]{\negred{$-$0.40} \\[-0.7ex] {\scriptsize$\pm$\,2.92}} & \makecell[tc]{\negred{$-$1.61} \\[-0.7ex] {\scriptsize$\pm$\,6.00}} & \makecell[tc]{\negred{$-$0.11} \\[-0.7ex] {\scriptsize$\pm$\,0.54}} & \makecell[tc]{\negred{$-$0.02} \\[-0.7ex] {\scriptsize$\pm$\,0.36}} & \makecell[tc]{\posgreen{$+$0.04} \\[-0.7ex] {\scriptsize$\pm$\,0.29}} \\
        SAGE & \makecell[tc]{\negred{$-$0.42} \\[-0.7ex] {\scriptsize$\pm$\,1.35}} & \makecell[tc]{\textbf{\negred{$-$2.52}} \\[-0.7ex] {\scriptsize$\pm$\,1.93}} & \makecell[tc]{\posgreen{$+$0.54} \\[-0.7ex] {\scriptsize$\pm$\,1.42}} & \makecell[tc]{\negred{$-$0.06} \\[-0.7ex] {\scriptsize$\pm$\,0.12}} & \makecell[tc]{\posgreen{$+$0.13} \\[-0.7ex] {\scriptsize$\pm$\,0.48}} & \makecell[tc]{\negred{$-$0.08} \\[-0.7ex] {\scriptsize$\pm$\,0.17}} & \makecell[tc]{\textbf{\posgreen{$+$0.06}} \\[-0.7ex] {\scriptsize$\pm$\,0.05}} & \makecell[tc]{\posgreen{$+$0.13} \\[-0.7ex] {\scriptsize$\pm$\,0.31}} & \makecell[tc]{\posgreen{$+$0.02} \\[-0.7ex] {\scriptsize$\pm$\,3.00}} & \makecell[tc]{\posgreen{$+$0.63} \\[-0.7ex] {\scriptsize$\pm$\,7.31}} & \makecell[tc]{\posgreen{$+$0.21} \\[-0.7ex] {\scriptsize$\pm$\,0.59}} & \makecell[tc]{\negred{$-$0.11} \\[-0.7ex] {\scriptsize$\pm$\,0.33}} & --- \\
        GAT & \makecell[tc]{\posgreen{$+$0.74} \\[-0.7ex] {\scriptsize$\pm$\,1.27}} & \makecell[tc]{\negred{$-$0.68} \\[-0.7ex] {\scriptsize$\pm$\,0.92}} & \makecell[tc]{\posgreen{$+$0.26} \\[-0.7ex] {\scriptsize$\pm$\,1.24}} & \makecell[tc]{\posgreen{$+$0.06} \\[-0.7ex] {\scriptsize$\pm$\,0.31}} & \makecell[tc]{\textbf{\posgreen{$+$0.07}} \\[-0.7ex] {\scriptsize$\pm$\,0.07}} & \makecell[tc]{\negred{$-$0.08} \\[-0.7ex] {\scriptsize$\pm$\,0.14}} & \makecell[tc]{\posgreen{$+$0.03} \\[-0.7ex] {\scriptsize$\pm$\,0.08}} & \makecell[tc]{\negred{$-$0.01} \\[-0.7ex] {\scriptsize$\pm$\,1.07}} & \makecell[tc]{\posgreen{$+$0.90} \\[-0.7ex] {\scriptsize$\pm$\,2.58}} & \makecell[tc]{\negred{$-$0.81} \\[-0.7ex] {\scriptsize$\pm$\,6.97}} & \makecell[tc]{\negred{$-$0.27} \\[-0.7ex] {\scriptsize$\pm$\,0.45}} & \makecell[tc]{\negred{$-$26.21} \\[-0.7ex] {\scriptsize$\pm$\,44.01}} & \makecell[tc]{\posgreen{$+$0.29} \\[-0.7ex] {\scriptsize$\pm$\,0.78}} \\
        \bottomrule
        \end{tabular}
    }
\end{table}

\paragraph{Meta-learned sharpening coefficient.}
\label{par:meta-learned}
We consider a dynamic variant in which the sharpening coefficient is adapted during training. At each epoch, we randomly split the labeled training nodes into an inner-training subset and a held-out meta-training subset. The model parameters are first updated differentiably on the inner-training subset using the transductive sharpening objective. We then update $\lambda$ so that the one-step-updated model minimizes the supervised loss on the held-out meta-training subset. As shown in Table~\ref{tab:meta_lambda}, this adaptive strategy performs worse than the simpler validation-selected constant $\lambda$, suggesting that the additional meta-optimization introduces instability or noise that is not offset by better coefficient selection.
\begin{table}[H]
\centering
\caption{
Test-accuracy difference between the meta-learned-$\lambda$ variant and the constant-$\lambda$ transductive sharpening baseline. Negative values indicate that adapting $\lambda$ during training performs worse than selecting a fixed value by validation. \posgreen{Green} when positive, \negred{red} when negative; \textbf{bold} when $|\Delta| > \sigma$. Meta-learning the $\lambda$ value shows no advantage to considering a fixed value.
}
\label{tab:meta_lambda}
\fittotextwidth{%
    \setlength{\tabcolsep}{3pt}%
    \begin{tabular}{l ccccccccccccc}
    \toprule
     & Cora & CiteSeer & PubMed & Computer & Photo & CS & Physics & WikiCS & Squirrel & Chameleon & Amazon-Rat. & Roman-Emp. & Minesweeper \\
     \cmidrule(lr){2-9} \cmidrule(lr){10-14}
    GCN & \makecell[tc]{\textbf{\negred{$-$3.72}} \\[-0.7ex] {\scriptsize$\pm$\,0.80}} & \makecell[tc]{\textbf{\negred{$-$3.40}} \\[-0.7ex] {\scriptsize$\pm$\,0.33}} & \makecell[tc]{\textbf{\negred{$-$2.18}} \\[-0.7ex] {\scriptsize$\pm$\,0.83}} & \makecell[tc]{\negred{$-$0.19} \\[-0.7ex] {\scriptsize$\pm$\,0.27}} & \makecell[tc]{\textbf{\negred{$-$1.35}} \\[-0.7ex] {\scriptsize$\pm$\,0.76}} & \makecell[tc]{\negred{$-$0.06} \\[-0.7ex] {\scriptsize$\pm$\,0.11}} & \makecell[tc]{0 \\[-0.7ex] {\scriptsize$\pm$\,0.14}} & \makecell[tc]{\negred{$-$0.23} \\[-0.7ex] {\scriptsize$\pm$\,0.72}} & \makecell[tc]{\posgreen{$+$0.16} \\[-0.7ex] {\scriptsize$\pm$\,2.83}} & \makecell[tc]{\negred{$-$0.25} \\[-0.7ex] {\scriptsize$\pm$\,5.82}} & \makecell[tc]{\negred{$-$0.57} \\[-0.7ex] {\scriptsize$\pm$\,0.88}} & \makecell[tc]{\textbf{\negred{$-$0.50}} \\[-0.7ex] {\scriptsize$\pm$\,0.27}} & \makecell[tc]{\negred{$-$0.38} \\[-0.7ex] {\scriptsize$\pm$\,0.48}} \\
    SAGE & \makecell[tc]{\textbf{\negred{$-$4.68}} \\[-0.7ex] {\scriptsize$\pm$\,1.59}} & \makecell[tc]{\textbf{\negred{$-$5.36}} \\[-0.7ex] {\scriptsize$\pm$\,0.94}} & \makecell[tc]{\posgreen{$+$0.16} \\[-0.7ex] {\scriptsize$\pm$\,0.98}} & \makecell[tc]{\textbf{\negred{$-$0.39}} \\[-0.7ex] {\scriptsize$\pm$\,0.20}} & \makecell[tc]{\negred{$-$0.32} \\[-0.7ex] {\scriptsize$\pm$\,0.42}} & \makecell[tc]{\posgreen{$+$0.07} \\[-0.7ex] {\scriptsize$\pm$\,0.15}} & \makecell[tc]{\negred{$-$0.03} \\[-0.7ex] {\scriptsize$\pm$\,0.05}} & \makecell[tc]{\negred{$-$0.63} \\[-0.7ex] {\scriptsize$\pm$\,0.70}} & \makecell[tc]{\posgreen{$+$0.40} \\[-0.7ex] {\scriptsize$\pm$\,3.11}} & \makecell[tc]{\negred{$-$1.05} \\[-0.7ex] {\scriptsize$\pm$\,8.67}} & \makecell[tc]{\textbf{\negred{$-$1.84}} \\[-0.7ex] {\scriptsize$\pm$\,0.50}} & \makecell[tc]{\textbf{\negred{$-$0.53}} \\[-0.7ex] {\scriptsize$\pm$\,0.39}} & \makecell[tc]{\posgreen{$+$0.49} \\[-0.7ex] {\scriptsize$\pm$\,1.07}} \\
    GAT & \makecell[tc]{\textbf{\negred{$-$4.22}} \\[-0.7ex] {\scriptsize$\pm$\,1.40}} & \makecell[tc]{\textbf{\negred{$-$3.88}} \\[-0.7ex] {\scriptsize$\pm$\,1.32}} & \makecell[tc]{\negred{$-$0.36} \\[-0.7ex] {\scriptsize$\pm$\,1.57}} & \makecell[tc]{0 \\[-0.7ex] {\scriptsize$\pm$\,0.46}} & \makecell[tc]{\negred{$-$0.12} \\[-0.7ex] {\scriptsize$\pm$\,0.23}} & \makecell[tc]{\textbf{\posgreen{$+$0.13}} \\[-0.7ex] {\scriptsize$\pm$\,0.07}} & \makecell[tc]{\textbf{\negred{$-$0.15}} \\[-0.7ex] {\scriptsize$\pm$\,0.05}} & \makecell[tc]{\textbf{\negred{$-$1.13}} \\[-0.7ex] {\scriptsize$\pm$\,0.61}} & \makecell[tc]{\posgreen{$+$1.57} \\[-0.7ex] {\scriptsize$\pm$\,2.19}} & \makecell[tc]{\negred{$-$1.05} \\[-0.7ex] {\scriptsize$\pm$\,5.47}} & \makecell[tc]{\negred{$-$0.34} \\[-0.7ex] {\scriptsize$\pm$\,0.48}} & \makecell[tc]{\textbf{\negred{$-$0.43}} \\[-0.7ex] {\scriptsize$\pm$\,0.41}} & \makecell[tc]{\textbf{\negred{$-$1.00}} \\[-0.7ex] {\scriptsize$\pm$\,0.54}} \\
    \bottomrule
    \end{tabular}
}
\end{table}

\section{A Mechanistic Analysis of Transductive Sharpening}

In this section, we examine the training dynamics induced by Transductive Sharpening. We focus on a measurable effect of the objective: the evolution of predictive entropy on labeled and unlabeled nodes during training.

The central observation is that TS changes how confidence is allocated across the graph: it encourages the model to become more confident on unlabeled nodes, while the labeled-node correction prevents confidence from concentrating only on the supervised subset.

To measure this effect, we track the average predictive entropy on labeled and unlabeled nodes,
\[
H_L(t) = \frac{1}{|V_L|} \sum_{v \in V_L} H(p_v(t)),
\qquad
H_U(t) = \frac{1}{|V_U|} \sum_{v \in V_U} H(p_v(t)),
\]
as well as their difference,
\[
\Delta H(t) = H_L(t) - H_U(t).
\]
Larger values of $\Delta H(t)$ indicate that the model is relatively more confident on unlabeled nodes than on labeled nodes.

\begin{figure}[H]
    \centering
    \includegraphics[width=\textwidth]{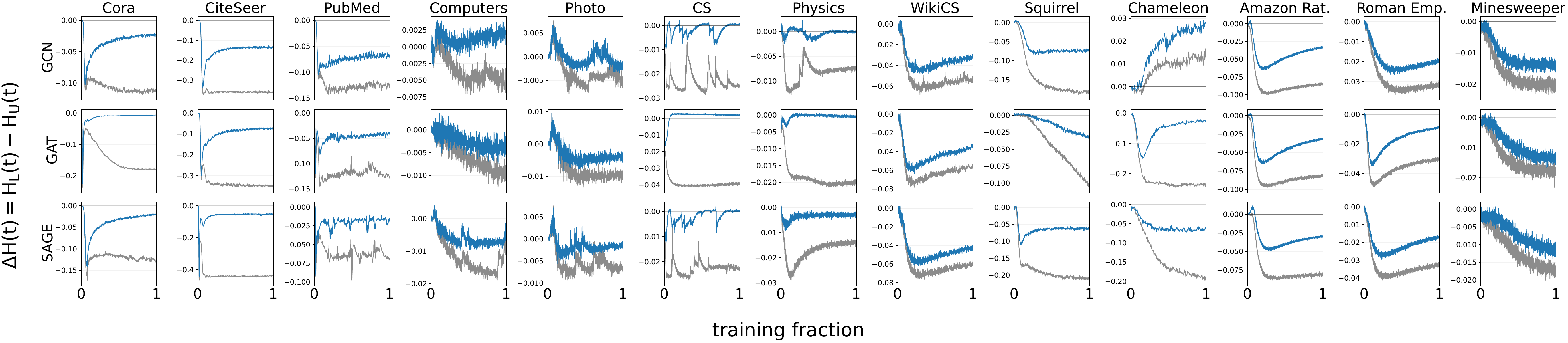}
    \caption{
    Entropy dynamics during training for the supervised baseline (grey) and TS (blue). TS lowers the entropy of unlabeled-node predictions relative to labeled-node predictions, producing a larger entropy gap $\Delta H(t)$. This implies that TS is distributing confidence to the unlabeled nodes as intended by its design.
    }
    \label{fig:entropy_trajectories}
\end{figure}

Figure~\ref{fig:entropy_trajectories} shows that TS reallocates confidence toward the unlabeled portion of the observed graph. Across the training trajectory, the entropy gap for $\lambda=0.25$ remains consistently above the supervised baseline $\lambda=0$, indicating that TS lowers the entropy of unlabeled-node predictions relative to labeled-node predictions.

This behavior suggests a reason why the effect of TS depends on the choice of $\lambda$ in Section~\ref{sec:lambda-effect}. Moderate sharpening can improve the learned decision boundary by turning reliable unlabeled predictions into an optimization signal, whereas overly large $\lambda$ can force the model to commit too strongly to its own predictions.

\section{Related Work}

\textbf{Graph Neural Networks.}
Graph neural networks (GNNs) have become the standard approach for node classification \cite{luo2024classic}, with architectures such as GCN \cite{kipf2017semisupervisedclassificationgraphconvolutional}, GAT \cite{gat}, and GraphSAGE \cite{hamilton2018inductiverepresentationlearninglarge}. Recent work has shown that well-tuned implementations of these models remain highly competitive \cite{tuned-gnn}, suggesting that performance gains are not solely driven by architectural innovation.

\textbf{Improved training of GNNs.}
A line of work has explored improving GNN performance through creating different training policies and losses. Common approaches include dropout-based methods \cite{JMLR:v15:srivastava14a}, structural perturbations such as DropEdge \cite{drop-edge} and DropNode \cite{Do_2021}, normalization techniques like PairNorm \cite{zhao2020pairnormtacklingoversmoothinggnns}, and data augmentation strategies including Mixup \cite{zhang2018mixupempiricalriskminimization, verma2019manifoldmixupbetterrepresentations}, and G-Mixup \cite{han2022gmixupgraphdataaugmentation}. While these methods improve GNN training, they do not explicitly leverage predictions on unlabeled nodes during training.

\textbf{Leveraging unlabeled data in GNN training.}
Recent methods explicitly incorporate both labeled and unlabeled data to improve GNN training. InfoGraph \cite{sun2020infographunsupervisedsemisupervisedgraphlevel} learns representations by maximizing mutual information between local and global views of the graph. Related approaches leverage contrastive learning to exploit unlabeled data \cite{you2021graphcontrastivelearningaugmentations}, while others rely on augmentation-based regularization such as consistency and diversity objectives \cite{Bo_Hu_Wang_Zhang_Shi_Zhou_2022}. Mixup-based methods incorporate unlabeled data through interpolation strategies \cite{10.1145/3442381.3449796, verma2019}, whereas \cite{every-node-counts} incorporate unlabeled nodes through information-theoretic objectives and additional regularization terms. Our approach differs from these methods in two key aspects. First, rather than introducing auxiliary objectives or augmentations, we operate directly on the model's predictive distribution. Second, our method requires no architectural changes and integrates seamlessly into standard training.

\textbf{Entropy and confidence-based objectives.}
Entropy plays a central role in many areas of machine learning as a measure of uncertainty. In semi-supervised learning, entropy minimization~\cite{ssl-entropy-minimization} has been used to encourage confident predictions on unlabeled data. Another approach widely used to prevent over-confidence in training is label smoothing. However, in graph neural networks, standard training objectives do not explicitly leverage entropy on unlabeled nodes, despite predictions being available for the entire graph. Our work revisits this gap and shows that directly shaping prediction entropy provides a simple and effective improvement.

\section{Conclusions and Future Work}

In this work, we introduced Transductive Sharpening (TS), a simple loss-level modification that exploits the predictions models already produce on unlabeled nodes during transductive graph learning. TS adds only a single scalar hyperparameter, requires no architectural changes, and consistently improves performance across architectures and benchmarks. These results show that TS offers a strong performance-complexity trade-off by delivering meaningful gains while requiring no substantive changes to the model architecture, data pipeline, or training procedure. More broadly, our findings suggest that unlabeled predictions, which are typically discarded by the standard supervised objective, provide a useful and underexploited signal for transductive graph learning, and that confidence-based estimates offer an effective way to extract it.

\paragraph{Future work.}
This work focuses on transductive node classification with standard GNN and MLP backbones, where Transductive Sharpening can be applied as a lightweight loss-level modification. While our experiments show consistent gains across a broad set of benchmarks, an immediate extension is to study whether the same objective remains effective in closely related transductive settings, such as link prediction or temporal node classification. Another useful direction is to make the sharpening strength mildly adaptive, for example by varying it across training stages or by using simple confidence-based criteria. We hope this work inspires new transductive graph-learning objectives that leverage unlabeled predictions produced during training to obtain gains in downstream tasks without adding architectural complexity.

\begin{ack}
The authors thank Petar Veličković for helpful discussions and feedback. Mar Gonzàlez I Català acknowledges that this project was supported by G-Research. Ferran Hernandez Caralt acknowledges that the project that gave rise to these results received the support of a fellowship from “la Caixa” Foundation (ID 100010434). The fellowship code is LCF/BQ/PFA25/11000012.
\end{ack}

\newpage

{\small\printbibliography}
\FloatBarrier

\appendix
\newpage
\section{Datasets and Experimental Details}
\label{app:experimental_details}

\subsection{Computing Environment}\label{subsec:compute_env}

Our implementation is built upon tunedGNN~\cite{tuned-gnn}, which is based on PyG~\cite{pyg} and DGL~\cite{dgl}. The experiments were conducted on a single workstation with an RTX 5090. With 3 datasets running in parallel, total compute time to run all 39 (model, dataset)-pairs averaged 1\,h 7\,min.

\subsection{Declaration of Hyperparameters}\label{subsec:reproducibility}

\paragraph{TS variants.} All TS variants inherit their per-cell backbone hyperparameters (depth, width, learning rate, dropout, normalisation, residual connections, training epochs) from TunedGNN~\cite{tuned-gnn} Tables 8-10, reproduced here as Tables~\ref{tab:hparams_meta_lambda_gcn}, \ref{tab:hparams_meta_lambda_gat}, and \ref{tab:hparams_meta_lambda_sage}. The optimal lambda we found is reported in \(\lambda^\star\).

\begin{table}[H]
    \centering
    \scriptsize
    \caption{TunedGNN hyperparameters for GCN.}
    \label{tab:hparams_meta_lambda_gcn}
    \begin{tabular}{lcccccccc}
    \toprule
    Dataset & ResNet & Normalization & Dropout rate & GNNs layer $L$ & Hidden dim & LR & epoch & $\lambda^\star$ \\
    \midrule
    Cora & False & False & 0.7 & 3 & 512 & 0.001 & 500 & 1.35 \\
    Citeseer & False & False & 0.5 & 2 & 512 & 0.001 & 500 & 0.15 \\
    Pubmed & False & False & 0.7 & 2 & 256 & 0.005 & 500 & 0.3 \\
    Computer & False & LN & 0.5 & 3 & 512 & 0.001 & 1000 & 0.65 \\
    Photo & True & LN & 0.5 & 6 & 256 & 0.001 & 1000 & 0.35 \\
    CS & True & LN & 0.3 & 2 & 512 & 0.001 & 1500 & 0.6 \\
    Physics & True & LN & 0.3 & 2 & 64 & 0.001 & 1500 & 0.25 \\
    WikiCS & False & LN & 0.5 & 3 & 256 & 0.001 & 1000 & 0.8 \\
    \midrule
    Squirrel & True & BN & 0.7 & 4 & 256 & 0.01 & 500 & 0.45 \\
    Chameleon & False & False & 0.2 & 5 & 512 & 0.005 & 200 & 0.25 \\
    Amazon-Ratings & True & BN & 0.5 & 4 & 512 & 0.001 & 2500 & 0.5 \\
    Roman-Empire & True & BN & 0.5 & 9 & 512 & 0.001 & 2500 & 0.3 \\
    Minesweeper & True & BN & 0.2 & 12 & 64 & 0.01 & 2000 & 0.1 \\
    \bottomrule
    \end{tabular}

\end{table}

\begin{table}[H]
    \centering
    \scriptsize
    \caption{TunedGNN hyperparameters for GAT.}
    \label{tab:hparams_meta_lambda_gat}
    \begin{tabular}{lcccccccc}
    \toprule
    Dataset & ResNet & Normalization & Dropout rate & GNNs layer $L$ & Hidden dim & LR & epoch & $\lambda^\star$ \\
    \midrule
    Cora & True & False & 0.2 & 3 & 512 & 0.001 & 500 & 1.25 \\
    Citeseer & True & False & 0.5 & 3 & 256 & 0.001 & 500 & 0.1 \\
    Pubmed & False & False & 0.5 & 2 & 512 & 0.01 & 500 & 1.55 \\
    Computer & False & LN & 0.5 & 2 & 64 & 0.001 & 1000 & 0.1 \\
    Photo & True & LN & 0.5 & 3 & 64 & 0.001 & 1000 & 0.4 \\
    CS & True & LN & 0.3 & 1 & 256 & 0.001 & 1500 & 0 \\
    Physics & True & BN & 0.7 & 2 & 256 & 0.001 & 1500 & 0.5 \\
    WikiCS & True & LN & 0.7 & 2 & 512 & 0.001 & 1000 & 0.9 \\
    \midrule
    Squirrel & True & BN & 0.5 & 7 & 512 & 0.005 & 500 & 1 \\
    Chameleon & True & BN & 0.7 & 2 & 256 & 0.01 & 200 & 0.3 \\
    Amazon-Ratings & True & BN & 0.5 & 4 & 512 & 0.001 & 2500 & 0.3 \\
    Roman-Empire & True & BN & 0.3 & 10 & 512 & 0.001 & 2500 & 0.15 \\
    Minesweeper & True & BN & 0.2 & 15 & 64 & 0.01 & 2000 & 0 \\
    \bottomrule
    \end{tabular}

\end{table}

\begin{table}[H]
    \centering
    \scriptsize
    \caption{TunedGNN hyperparameters for SAGE.}
    \label{tab:hparams_meta_lambda_sage}
    \begin{tabular}{lcccccccc}
    \toprule
    Dataset & ResNet & Normalization & Dropout rate & GNNs layer $L$ & Hidden dim & LR & epoch & $\lambda^\star$ \\
    \midrule
    Cora & False & False & 0.7 & 3 & 256 & 0.001 & 500 & 0.65 \\
    Citeseer & False & False & 0.2 & 3 & 512 & 0.001 & 500 & 0.05 \\
    Pubmed & False & False & 0.7 & 4 & 512 & 0.005 & 500 & 1.65 \\
    Computer & False & LN & 0.3 & 4 & 64 & 0.001 & 1000 & 0.4 \\
    Photo & True & LN & 0.2 & 6 & 64 & 0.001 & 1000 & 0.45 \\
    CS & True & LN & 0.5 & 2 & 512 & 0.001 & 1500 & 0.05 \\
    Physics & True & BN & 0.7 & 2 & 64 & 0.001 & 1500 & 0.25 \\
    WikiCS & False & LN & 0.7 & 2 & 256 & 0.001 & 1000 & 1.55 \\
    \midrule
    Squirrel & True & BN & 0.7 & 3 & 256 & 0.01 & 500 & 0.75 \\
    Chameleon & True & BN & 0.7 & 4 & 256 & 0.01 & 200 & 0.15 \\
    Amazon-Ratings & True & BN & 0.5 & 9 & 512 & 0.001 & 2500 & 0.9 \\
    Roman-Empire & False & BN & 0.3 & 9 & 256 & 0.001 & 2500 & 0.35 \\
    Minesweeper & True & BN & 0.2 & 15 & 64 & 0.01 & 2000 & 0.4 \\
    \bottomrule
    \end{tabular}

\end{table}

\paragraph{TS with retune.} The $\lambda$ sweep with retune variant in Appendix~\ref{subsec:hparam_retuning_ablation} inherits the same backbone hyperparameters and replaces $(\mathrm{lr}, \mathrm{dropout})$ with the $5{\times}5$ retune (Table~\ref{tab:hparams_lambda_sweep_retune_val}), all other settings are unchanged.

\begin{table}[H]
    \centering
    \scriptsize
    \caption{Hyperparameters for $\lambda$ sweep with retune.}
    \label{tab:hparams_lambda_sweep_retune_val}
    \begin{tabular}{r ll ll ll}
    \toprule
    & \multicolumn{2}{c}{GCN} & \multicolumn{2}{c}{SAGE} & \multicolumn{2}{c}{GAT} \\
    \cmidrule(lr){2-3} \cmidrule(lr){4-5} \cmidrule(lr){6-7}
    Dataset & lr & dropout & lr & dropout & lr & dropout \\
    \midrule
    Cora & 0.001 & 0.65 & 0.00025 & 0.7 & 0.0005 & 0.3 \\
    Citeseer & 0.004 & 0.6 & 0.001 & 0.3 & 0.0005 & 0.55 \\
    Pubmed & 0.01 & 0.8 & 0.0025 & 0.45 & 0.01 & 0.5 \\
    Computer & 0.001 & 0.55 & 0.0005 & 0.2 & 0.004 & 0.6 \\
    Photo & 0.001 & 0.4 & 0.0005 & 0.3 & 0.001 & 0.5 \\
    CS & 0.002 & 0.3 & 0.00025 & 0.4 & 0.00025 & 0.35 \\
    Physics & 0.0005 & 0.3 & 0.001 & 0.6 & 0.004 & 0.7 \\
    WikiCS & 0.0005 & 0.5 & 0.004 & 0.75 & 0.00025 & 0.8 \\
    \midrule
    Squirrel & 0.02 & 0.6 & 0.02 & 0.75 & 0.005 & 0.4 \\
    Chameleon & 0.02 & 0.15 & 0.02 & 0.7 & 0.01 & 0.75 \\
    Amazon-Ratings & 0.004 & 0.6 & 0.001 & 0.45 & 0.001 & 0.6 \\
    Roman-Empire & 0.004 & 0.4 & 0.004 & 0.25 & 0.001 & 0.3 \\
    Minesweeper & 0.02 & 0.25 & 0.02 & 0.2 & 0.01 & 0.2 \\
    \bottomrule
    \end{tabular}

\end{table}

\paragraph{MLP baselines.} For the MLP baselines and their TS variants, we use a single fixed architecture and optimization configuration across all datasets (Table~\ref{tab:hparams_mlp_fixed}). This avoids dataset-specific MLP tuning and isolates the effect of adding Transductive Sharpening to a feature-only model.

\paragraph{Meta-learned lambda.} For the meta-learned \(\lambda\) variant in Ablation~\ref{par:meta-learned}, we additionally use the fixed meta-learner hyperparameters listed in Table~\ref{tab:hparams_meta_lambda_constants}.

\noindent
\begin{minipage}[t]{0.48\textwidth}
\begin{table}[H]
    \centering
    \scriptsize
    \caption{Fixed MLP hyperparameters used across all datasets.}
    \label{tab:hparams_mlp_fixed}
    \begin{tabular}{rl}
        \toprule
        Hyperparameter & Value \\
        \midrule
        Hidden channels & 512 \\
        Training epochs & 1000 \\
        Learning rate & 0.001 \\
        Local layers & 3 \\
        Weight decay & 0.0005 \\
        Dropout & 0.5 \\
        Tsallis \(q\) & 2.0 \\
        Seeds (runs) per cell & 5 \\
        \bottomrule
    \end{tabular}
\end{table}
\end{minipage}%
\hfill
\begin{minipage}[t]{0.48\textwidth}
  \vspace{2em}
\begin{table}[H]
    \centering
    \scriptsize
    \caption{Meta-learner constants.}
    \label{tab:hparams_meta_lambda_constants}
    \begin{tabular}{rl}
      \toprule
      Meta-learner constant & Value \\
      \midrule
      $\lambda_{\max}$ & 1 \\
      Meta-LR & 0.001 \\
      Meta-hidden dim & 32 \\
      Meta-warmup epochs & 20 \\
      Tsallis $q$ & 2 \\
      Seeds (runs) per cell & 5 \\
      \bottomrule
    \end{tabular}

\end{table}
\end{minipage}

\subsection{Declaration of Splits}

\begin{itemize}
    \item \textbf{Cora, CiteSeer, PubMed}: random class-balanced splits with 20 train per class, 500 val, 1000 test.
    \item \textbf{Amazon-Computer, Amazon-Photo, Coauthor-CS, Coauthor-Physics}~\cite{shchur2018pitfalls}: a single fixed $60/20/20$ split per dataset.
    \item \textbf{WikiCS}~\cite{mernyei2020wiki}: the 20 splits provided by the dataset.
    \item \textbf{Squirrel, Chameleon}~\cite{pei2020geom}: 10 splits on the filtered (updated version) versions of Platonov et al.~\cite{platonov2023critical} (${\sim}48/32/20$ after filtering).
    \item \textbf{Roman-Empire, Amazon-Ratings, Minesweeper}~\cite{platonov2023critical}: 10 fixed $50/25/25$ splits.
\end{itemize}

\FloatBarrier

\section{Additional Results}
\label{app:additional_results}

\subsection{Results on ogbn-arxiv}

\begin{table}[H]
    \centering
    \caption{Results on ogbn-arxiv. Means $\pm$ std over 5 seeds.  \posgreen{Green} when positive, \negred{red} when negative; \textbf{bold} when $|\Delta| > \sigma$.}
    \label{tab:other_datasets}
    \fittotextwidth{%
        \setlength{\tabcolsep}{3pt}%
        \begin{tabular}{r ccc !{\hskip 1em} ccc}
            \toprule
             & \multicolumn{3}{c}{Baseline} & \multicolumn{3}{c}{$+$TS ($\lambda{=}0.25$)} \\
            \cmidrule(lr){2-4} \cmidrule(lr){5-7}
            Dataset & GCN & SAGE & GAT & GCN & SAGE & GAT \\
            \midrule
            ogbn-arxiv & \makecell[tc]{$73.09$ \\[-0.7ex] {\scriptsize$\pm$\,0.18}} & \makecell[tc]{$72.48$ \\[-0.7ex] {\scriptsize$\pm$\,0.26}} & \makecell[tc]{$72.38$ \\[-0.7ex] {\scriptsize$\pm$\,0.12}} & \makecell[tc]{$73.30$\,{\scriptsize$\pm$\,0.23} \\ $+$0.21} & \makecell[tc]{$72.72$\,{\scriptsize$\pm$\,0.17} \\ $+$0.24} & \makecell[tc]{$72.53$\,{\scriptsize$\pm$\,0.18} \\ $+$0.15} \\
            \bottomrule
        \end{tabular}
    }
\end{table}

\subsection{Hyperparameter retuning}\label{subsec:hparam_retuning_ablation}

In a separate diagnostic, we additionally re-tune $(\mathrm{lr}, \mathrm{dropout})$ on a $5\!\times\!5$ grid centred on TunedGNN's per-cell defaults at the val-best $\lambda^\star$. Table~\ref{tab:hparam_changes} reports the resulting test-accuracy shifts relative to the same cell with TunedGNN's defaults: most cells move by less than one combined std, suggesting that the gains reported in the main results would not increase substantially under this additional tuning.

\begin{table}[H]
    \centering
    \caption{Test-accuracy gain attributable to the $(\mathrm{lr}, \mathrm{dropout})$ retune, relative to the same cell pre-retune (i.e., at the $\lambda^\star$ but with TunedGNN's default $\mathrm{lr}$ and $\mathrm{dropout}$).  \posgreen{Green} when positive, \negred{red} when negative; \textbf{bold} when $|\Delta| > \sigma$.}
    \label{tab:hparam_changes}
    \fittotextwidth{%
        \setlength{\tabcolsep}{3pt}%
        \begin{tabular}{l ccccccccccccc}
        \toprule
         & Cora & CiteSeer & PubMed & Computer & Photo & CS & Physics & WikiCS & Squirrel & Chameleon & Amazon-Rat. & Roman-Emp. & Minesweeper \\
         \cmidrule(lr){2-9} \cmidrule(lr){10-14}
        GCN & \makecell[tc]{\negred{$-$0.12} \\[-0.7ex] {\scriptsize$\pm$\,0.73}} & \makecell[tc]{\negred{$-$0.08} \\[-0.7ex] {\scriptsize$\pm$\,0.64}} & \makecell[tc]{\posgreen{$+$0.18} \\[-0.7ex] {\scriptsize$\pm$\,0.91}} & \makecell[tc]{\negred{$-$0.25} \\[-0.7ex] {\scriptsize$\pm$\,0.32}} & \makecell[tc]{\negred{$-$0.07} \\[-0.7ex] {\scriptsize$\pm$\,0.33}} & \makecell[tc]{\negred{$-$0.01} \\[-0.7ex] {\scriptsize$\pm$\,0.17}} & \makecell[tc]{\posgreen{$+$0.04} \\[-0.7ex] {\scriptsize$\pm$\,0.09}} & \makecell[tc]{\negred{$-$0.18} \\[-0.7ex] {\scriptsize$\pm$\,0.77}} & \makecell[tc]{\negred{$-$0.04} \\[-0.7ex] {\scriptsize$\pm$\,2.75}} & \makecell[tc]{\negred{$-$0.07} \\[-0.7ex] {\scriptsize$\pm$\,6.57}} & \makecell[tc]{\posgreen{$+$0.41} \\[-0.7ex] {\scriptsize$\pm$\,0.49}} & \makecell[tc]{\textbf{\posgreen{$+$0.36}} \\[-0.7ex] {\scriptsize$\pm$\,0.34}} & \makecell[tc]{\posgreen{$+$0.21} \\[-0.7ex] {\scriptsize$\pm$\,0.25}} \\
        SAGE & \makecell[tc]{\negred{$-$0.16} \\[-0.7ex] {\scriptsize$\pm$\,1.21}} & \makecell[tc]{\textbf{\negred{$-$0.36}} \\[-0.7ex] {\scriptsize$\pm$\,0.32}} & \makecell[tc]{\posgreen{$+$0.38} \\[-0.7ex] {\scriptsize$\pm$\,1.38}} & \makecell[tc]{\posgreen{$+$0.04} \\[-0.7ex] {\scriptsize$\pm$\,0.24}} & \makecell[tc]{\posgreen{$+$0.33} \\[-0.7ex] {\scriptsize$\pm$\,0.36}} & \makecell[tc]{\posgreen{$+$0.11} \\[-0.7ex] {\scriptsize$\pm$\,0.13}} & \makecell[tc]{\posgreen{$+$0.04} \\[-0.7ex] {\scriptsize$\pm$\,0.06}} & \makecell[tc]{\posgreen{$+$0.15} \\[-0.7ex] {\scriptsize$\pm$\,0.44}} & \makecell[tc]{\negred{$-$1.28} \\[-0.7ex] {\scriptsize$\pm$\,2.51}} & \makecell[tc]{\posgreen{$+$0.67} \\[-0.7ex] {\scriptsize$\pm$\,6.97}} & \makecell[tc]{\posgreen{$+$0.11} \\[-0.7ex] {\scriptsize$\pm$\,0.63}} & \makecell[tc]{\textbf{\posgreen{$+$0.46}} \\[-0.7ex] {\scriptsize$\pm$\,0.38}} & \makecell[tc]{\textbf{\posgreen{$+$0.81}} \\[-0.7ex] {\scriptsize$\pm$\,0.52}} \\
        GAT & \makecell[tc]{\negred{$-$1.66} \\[-0.7ex] {\scriptsize$\pm$\,1.70}} & \makecell[tc]{\posgreen{$+$0.38} \\[-0.7ex] {\scriptsize$\pm$\,0.63}} & 0 & \makecell[tc]{\posgreen{$+$0.05} \\[-0.7ex] {\scriptsize$\pm$\,0.32}} & 0 & \makecell[tc]{\posgreen{$+$0.01} \\[-0.7ex] {\scriptsize$\pm$\,0.05}} & \makecell[tc]{\posgreen{$+$0.03} \\[-0.7ex] {\scriptsize$\pm$\,0.06}} & \makecell[tc]{\posgreen{$+$0.11} \\[-0.7ex] {\scriptsize$\pm$\,0.31}} & \makecell[tc]{\posgreen{$+$0.61} \\[-0.7ex] {\scriptsize$\pm$\,3.09}} & \makecell[tc]{\negred{$-$1.89} \\[-0.7ex] {\scriptsize$\pm$\,6.34}} & \makecell[tc]{\posgreen{$+$0.13} \\[-0.7ex] {\scriptsize$\pm$\,0.42}} & 0 & 0 \\
        \bottomrule
        \end{tabular}
    }
\end{table}

\section{Further Ablations}

\paragraph{Sharpen only on test nodes.}
We evaluate a variant with $\lambda = 0.25$ in which TS is applied only to test nodes, excluding validation nodes from the unlabeled-node sharpening term.
\begin{table}[H]
    \centering
    \caption{Test-accuracy gain of the restricted variant over the standard TS.  \posgreen{Green} when positive, \negred{red} when negative; \textbf{bold} when $|\Delta| > \sigma$.}
    \label{tab:ts_test_nodes_changes}
    \fittotextwidth{%
        \setlength{\tabcolsep}{3pt}%
        \begin{tabular}{l ccccccccccccc}
        \toprule
         & Cora & CiteSeer & PubMed & Computer & Photo & CS & Physics & WikiCS & Squirrel & Chameleon & Amazon-Rat. & Roman-Emp. & Minesweeper \\
         \cmidrule(lr){2-9} \cmidrule(lr){10-14}
        GCN & \makecell[tc]{\textbf{\negred{$-$3.84}} \\[-0.7ex] {\scriptsize$\pm$\,0.70}} & \makecell[tc]{\textbf{\negred{$-$3.10}} \\[-0.7ex] {\scriptsize$\pm$\,0.30}} & \makecell[tc]{\textbf{\negred{$-$2.66}} \\[-0.7ex] {\scriptsize$\pm$\,0.74}} & \makecell[tc]{\posgreen{$+$0.25} \\[-0.7ex] {\scriptsize$\pm$\,0.29}} & \makecell[tc]{\negred{$-$0.02} \\[-0.7ex] {\scriptsize$\pm$\,0.20}} & \makecell[tc]{\textbf{\posgreen{$+$0.10}} \\[-0.7ex] {\scriptsize$\pm$\,0.07}} & \makecell[tc]{\posgreen{$+$0.02} \\[-0.7ex] {\scriptsize$\pm$\,0.15}} & \makecell[tc]{\negred{$-$0.15} \\[-0.7ex] {\scriptsize$\pm$\,0.55}} & \makecell[tc]{\posgreen{$+$0.04} \\[-0.7ex] {\scriptsize$\pm$\,2.70}} & \makecell[tc]{\negred{$-$0.58} \\[-0.7ex] {\scriptsize$\pm$\,6.47}} & \makecell[tc]{\negred{$-$0.04} \\[-0.7ex] {\scriptsize$\pm$\,0.47}} & \makecell[tc]{\textbf{\negred{$-$0.30}} \\[-0.7ex] {\scriptsize$\pm$\,0.28}} & \makecell[tc]{\negred{$-$0.11} \\[-0.7ex] {\scriptsize$\pm$\,0.37}} \\
        SAGE & \makecell[tc]{\textbf{\negred{$-$4.56}} \\[-0.7ex] {\scriptsize$\pm$\,1.43}} & \makecell[tc]{\negred{$-$0.78} \\[-0.7ex] {\scriptsize$\pm$\,1.65}} & \makecell[tc]{\posgreen{$+$1.20} \\[-0.7ex] {\scriptsize$\pm$\,1.60}} & \makecell[tc]{\textbf{\negred{$-$0.19}} \\[-0.7ex] {\scriptsize$\pm$\,0.19}} & \makecell[tc]{\posgreen{$+$0.26} \\[-0.7ex] {\scriptsize$\pm$\,0.48}} & \makecell[tc]{\posgreen{$+$0.09} \\[-0.7ex] {\scriptsize$\pm$\,0.11}} & \makecell[tc]{\negred{$-$0.06} \\[-0.7ex] {\scriptsize$\pm$\,0.12}} & \makecell[tc]{\negred{$-$0.25} \\[-0.7ex] {\scriptsize$\pm$\,0.43}} & \makecell[tc]{\posgreen{$+$0.30} \\[-0.7ex] {\scriptsize$\pm$\,3.26}} & \makecell[tc]{\posgreen{$+$1.84} \\[-0.7ex] {\scriptsize$\pm$\,7.61}} & \makecell[tc]{\negred{$-$0.07} \\[-0.7ex] {\scriptsize$\pm$\,0.45}} & \makecell[tc]{\negred{$-$0.22} \\[-0.7ex] {\scriptsize$\pm$\,0.32}} & \makecell[tc]{\posgreen{$+$0.06} \\[-0.7ex] {\scriptsize$\pm$\,0.90}} \\
        GAT & \makecell[tc]{\textbf{\negred{$-$2.32}} \\[-0.7ex] {\scriptsize$\pm$\,1.49}} & \makecell[tc]{\textbf{\negred{$-$3.00}} \\[-0.7ex] {\scriptsize$\pm$\,1.33}} & \makecell[tc]{\textbf{\negred{$-$2.18}} \\[-0.7ex] {\scriptsize$\pm$\,1.40}} & \makecell[tc]{\posgreen{$+$0.11} \\[-0.7ex] {\scriptsize$\pm$\,0.21}} & \makecell[tc]{\negred{$-$0.05} \\[-0.7ex] {\scriptsize$\pm$\,0.12}} & \makecell[tc]{\posgreen{$+$0.03} \\[-0.7ex] {\scriptsize$\pm$\,0.11}} & \makecell[tc]{\negred{$-$0.01} \\[-0.7ex] {\scriptsize$\pm$\,0.08}} & \makecell[tc]{\negred{$-$0.19} \\[-0.7ex] {\scriptsize$\pm$\,1.27}} & \makecell[tc]{\posgreen{$+$1.30} \\[-0.7ex] {\scriptsize$\pm$\,3.46}} & \makecell[tc]{\negred{$-$1.58} \\[-0.7ex] {\scriptsize$\pm$\,5.89}} & \makecell[tc]{0 \\[-0.7ex] {\scriptsize$\pm$\,0.38}} & \makecell[tc]{\textbf{\negred{$-$0.52}} \\[-0.7ex] {\scriptsize$\pm$\,0.51}} & \makecell[tc]{\posgreen{$+$0.40} \\[-0.7ex] {\scriptsize$\pm$\,0.76}} \\
        \bottomrule
        \end{tabular}
    }
\end{table}

Planetoid has nodes that are not in test, val, or train. We continue to sharpen these nodes for the purpose of this ablation.

\section{Further Visualizations}
\label{app:further-visualizations}

This appendix provides additional visualizations of the effect of the sharpening coefficient $\lambda$ by further detailing the trends studied in Section~\ref{sec:lambda-effect}.

Figure~\ref{fig:improvement_regression_per_lambda} reports improvements and regressions in units of the baseline standard deviation, using Glass's $\Delta$ relative to the $\lambda=0$ supervised baseline. The results show that small positive values of $\lambda$ provide the most favorable trade-off: improvements are frequent and often larger than the corresponding regressions. As $\lambda$ increases, regressions become more common and more severe, indicating that aggressive sharpening is less robust. Figure~\ref{fig:lambda_curves} complements this view by showing the full test-accuracy curves for each dataset and backbone. The same qualitative pattern appears across many settings: negative values of $\lambda$ are usually harmful, moderate positive values often improve performance, and overly large values eventually degrade accuracy.

\begin{figure}[H]
    \centering
    \includegraphics[width=\textwidth]{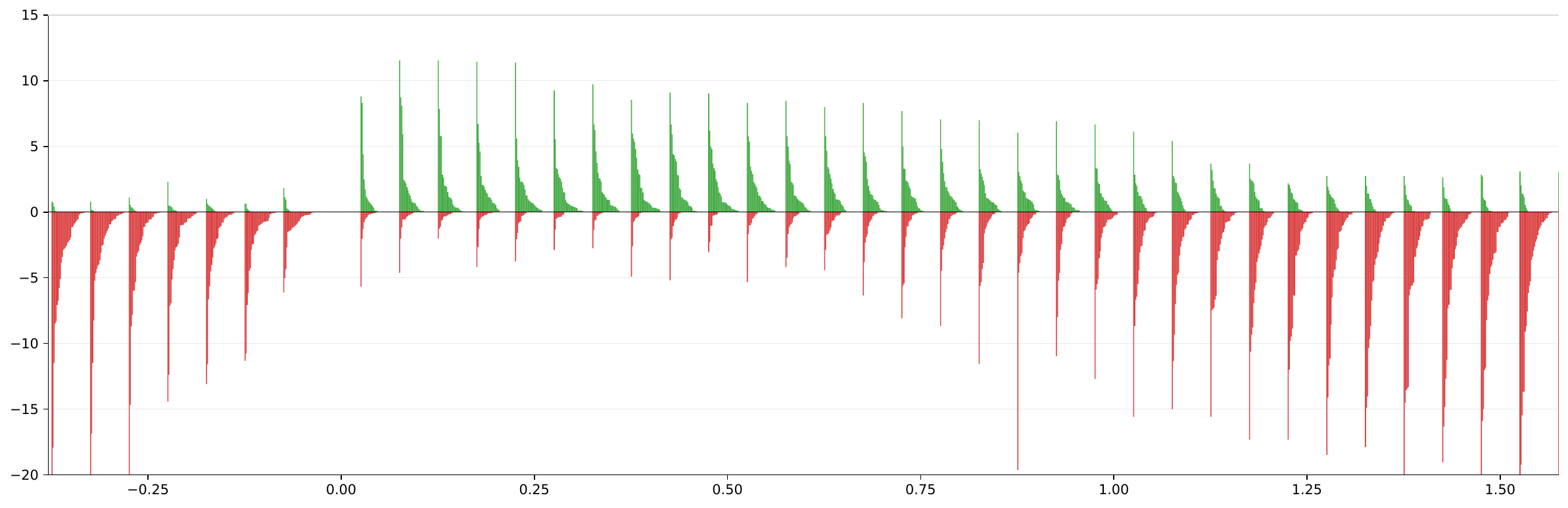}
    \caption{Distribution of improvements and regressions as a function of $\lambda$, measured using Glass's $\Delta$ relative to the $\lambda{=}0$ supervised baseline. Small positive values of $\lambda$ yield the most favorable balance, with improvements occurring frequently and regressions remaining comparatively limited. Larger values of $\lambda$ increase both the frequency and severity of regressions, indicating that aggressive sharpening is less robust across datasets and backbones.}
    \label{fig:improvement_regression_per_lambda}
\end{figure}

\begin{figure}[H]
    \centering
    \includegraphics[width=\textwidth]{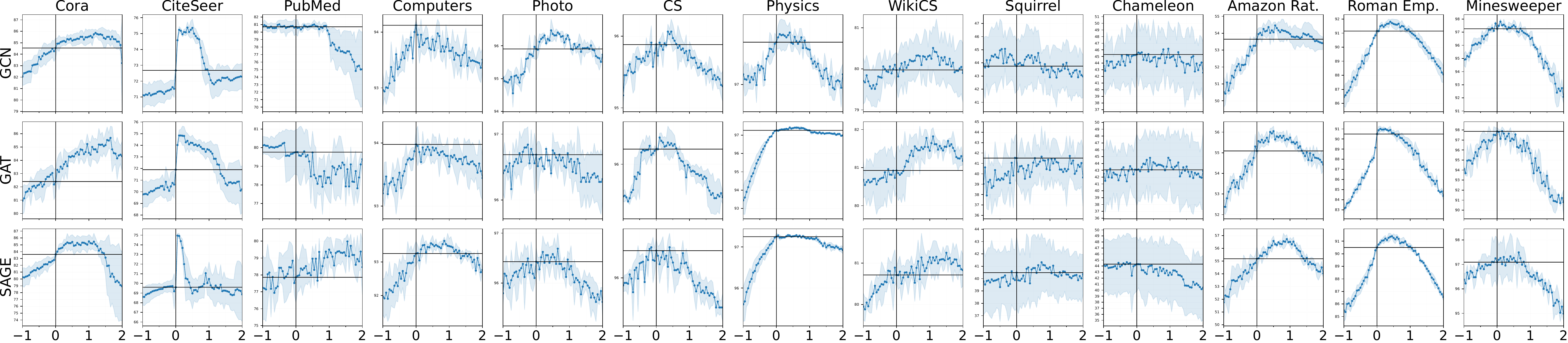}
    \caption{Test accuracy as a function of $\lambda$ for each dataset and backbone. The crosshair marks the $\lambda{=}0$ supervised baseline. Across many dataset--backbone pairs, performance improves over a finite interval of positive $\lambda$ values before degrading when $\lambda$ becomes too large, while negative values of $\lambda$ are often harmful. This supports the use of moderate positive sharpening and helps explain why a conservative universal value such as $\lambda=0.25$ performs reliably.}
    \label{fig:lambda_curves}
\end{figure}

\section{Comparison with competitive architectures}
\label{app:comparison}

This appendix provides a comparison between TS and recent node-classification methods. Our aim is not to claim that a loss-level modification replaces architectural advances, but to contextualize the performance of TS relative to more complex approaches. In Table~\ref{tab:homo_results} and Table~\ref{tab:hetero_results}, we compare TS against competitive architectures across homophilous and heterophilous datasets, respectively.

Overall, TS remains competitive despite using substantially simpler backbone architectures. This is especially clear on the homophilous benchmarks, where standard GNNs augmented with TS often match or approach the performance of more specialized methods. On heterophilous datasets, the comparison is more mixed, as expected, since many competing methods are specifically designed to handle heterophily. Nevertheless, the results show that a simple objective-level modification can recover a meaningful amount of performance without introducing additional architectural machinery.

\begin{table*}[h]
    \centering
    \caption{Node classification results over homophilous graphs (\%). Cells are left blank where the original paper used a different evaluation protocol than ours. The top \textbf{\textcolor{customcyan}{$\mathbf{1^{st}}$}}, \textbf{\textcolor{tealblue!90}{$\mathbf{2^{nd}}$}} and \textbf{\textcolor{darkorange!90}{$\mathbf{3^{rd}}$}} results are highlighted.}
    \setlength\tabcolsep{3pt}
    \resizebox{\linewidth}{!}{
    \begin{tabular}{l|llllllll}
        \toprule
            & Cora & CiteSeer & PubMed & Computer & Photo & CS & Physics & WikiCS \\
         \midrule
        \rowcolor{gray!20}
        GraphGPS~\cite{graphgps} & 83.87 {\tiny{± 0.96}} & 72.73 {\tiny{± 1.23}} & 79.94 {\tiny{± 0.26}} & 91.79 {\tiny{± 0.63}} & 94.89 {\tiny{± 0.14}} & 94.04 {\tiny{± 0.21}} & 96.71 {\tiny{± 0.15}} & 78.66 {\tiny{± 0.49}} \\
        NAGphormer~\cite{nagphormer} & 80.92 {\tiny{± 1.17}} & 70.59 {\tiny{± 0.89}} & 80.14 {\tiny{± 1.06}} & 91.69 {\tiny{± 0.30}} & 96.14 {\tiny{± 0.16}} & 95.85 {\tiny{± 0.16}} & 97.35 {\tiny{± 0.12}} & 77.92 {\tiny{± 0.93}} \\
        \rowcolor{gray!20}
        Exphormer~\cite{exphormer} & 83.29 {\tiny{± 1.36}} & 71.85 {\tiny{± 1.11}} & 79.67 {\tiny{± 0.73}} & 91.80 {\tiny{± 0.35}} & 95.69 {\tiny{± 0.39}} & 95.92 {\tiny{± 0.25}} & 97.06 {\tiny{± 0.13}} & 79.38 {\tiny{± 0.62}} \\
        GOAT~\cite{goat} & 83.26 {\tiny{± 1.24}} & 72.21 {\tiny{± 1.29}} & 80.06 {\tiny{± 0.67}} & 92.29 {\tiny{± 0.37}} & 94.33 {\tiny{± 0.21}} & 93.81 {\tiny{± 0.19}} & 96.47 {\tiny{± 0.16}} & 77.96 {\tiny{± 0.63}} \\
        \rowcolor{gray!20}
        NodeFormer~\cite{nodeformer} & 82.73 {\tiny{± 0.75}} & 72.37 {\tiny{± 1.20}} & 79.59 {\tiny{± 0.92}} & 87.29 {\tiny{± 0.58}} & 93.43 {\tiny{± 0.56}} & 95.69 {\tiny{± 0.27}} & 96.48 {\tiny{± 0.34}} & 75.13 {\tiny{± 0.93}} \\
        SGFormer~\cite{sgformer} & 84.82 {\tiny{± 0.85}} & 72.72 {\tiny{± 1.15}} & 80.60 {\tiny{± 0.49}} & 92.42 {\tiny{± 0.66}} & 95.58 {\tiny{± 0.36}} & 95.71 {\tiny{± 0.24}} & 96.75 {\tiny{± 0.26}} & 80.05 {\tiny{± 0.46}} \\
        \rowcolor{gray!20}
        Polynormer~\cite{polynormer} & 83.43 {\tiny{± 0.89}} & 72.19 {\tiny{± 0.83}} & 79.35 {\tiny{± 0.73}} & 93.78 {\tiny{± 0.10}} & 96.57 {\tiny{± 0.23}} & 95.42 {\tiny{± 0.19}} & 97.18 {\tiny{± 0.11}} & 80.26 {\tiny{± 0.92}} \\
        MLP & 60.96 {\tiny{± 2.51}} & 56.58 {\tiny{± 1.14}} & 68.96 {\tiny{± 1.16}} & 82.46 {\tiny{± 0.45}} & 87.57 {\tiny{± 0.52}} & 91.54 {\tiny{± 0.20}} & 95.97 {\tiny{± 0.07}} & 70.96 {\tiny{± 1.00}} \\
        \rowcolor{gray!20}
        NormProp~\cite{normprop} & \textbf{\textcolor{darkorange!90}{85.46 {\tiny{± 0.51}}}} & 74.33 {\tiny{± 0.57}} & \textbf{\textcolor{darkorange!90}{80.72 {\tiny{± 1.09}}}} &  &  &  &  &  \\
        OGFormer~\cite{ogformer} & \textbf{\textcolor{customcyan}{86.40 {\tiny{± 0.30}}}} & 74.70 {\tiny{± 0.50}} & \textbf{\textcolor{customcyan}{81.50 {\tiny{± 0.50}}}} & 92.90 {\tiny{± 0.30}} & 95.50 {\tiny{± 0.00}} & 95.20 {\tiny{± 0.10}} &  &  \\
        \rowcolor{gray!20}
        ELU-GCN~\cite{elu-gcn} & 84.29 {\tiny{± 0.39}} & 74.23 {\tiny{± 0.62}} & 80.51 {\tiny{± 0.21}} &  &  &  &  &  \\
        GraphTARIF~\cite{graph-tarif} &  &  &  & \textbf{\textcolor{customcyan}{94.61 {\tiny{± 0.17}}}} & \textbf{\textcolor{customcyan}{97.03 {\tiny{± 0.19}}}} & \textbf{\textcolor{customcyan}{96.51 {\tiny{± 0.11}}}} & \textbf{\textcolor{tealblue!90}{97.39 {\tiny{± 0.07}}}} & \textbf{\textcolor{darkorange!90}{80.93 {\tiny{± 0.57}}}} \\
        \rowcolor{gray!20}
        GCN~\cite{kipf2017semisupervisedclassificationgraphconvolutional} & 84.54 {\tiny{± 0.86}} & 72.68 {\tiny{± 0.43}} & 80.70 {\tiny{± 0.96}} & \textbf{\textcolor{tealblue!90}{94.12 {\tiny{± 0.08}}}} & 95.90 {\tiny{± 0.33}} & 95.88 {\tiny{± 0.03}} & \textbf{\textcolor{darkorange!90}{97.38 {\tiny{± 0.06}}}} & 79.97 {\tiny{± 0.43}} \\
        SAGE~\cite{hamilton2018inductiverepresentationlearninglarge} & 83.60 {\tiny{± 0.58}} & 69.60 {\tiny{± 0.61}} & 77.86 {\tiny{± 1.49}} & 93.25 {\tiny{± 0.36}} & 96.43 {\tiny{± 0.27}} & \textbf{\textcolor{tealblue!90}{96.29 {\tiny{± 0.12}}}} & 97.25 {\tiny{± 0.08}} & 80.71 {\tiny{± 0.19}} \\
        \rowcolor{gray!20}
        GAT~\cite{gat} & 82.40 {\tiny{± 1.01}} & 71.90 {\tiny{± 0.25}} & 79.76 {\tiny{± 1.21}} & \textbf{\textcolor{darkorange!90}{93.98 {\tiny{± 0.22}}}} & \textbf{\textcolor{tealblue!90}{96.69 {\tiny{± 0.14}}}} & 96.17 {\tiny{± 0.02}} & 97.26 {\tiny{± 0.03}} & 80.92 {\tiny{± 0.58}} \\
        \midrule
        \textbf{MLP+TS} & 64.48 {\tiny{± 2.83}} & 62.72 {\tiny{± 3.20}} & 72.30 {\tiny{± 1.89}} & 82.87 {\tiny{± 0.56}} & 87.65 {\tiny{± 0.22}} & 91.77 {\tiny{± 0.38}} & 95.98 {\tiny{± 0.08}} & 72.48 {\tiny{± 0.83}} \\
        \rowcolor{gray!20}
        \textbf{GCN+TS} & \textbf{\textcolor{tealblue!90}{85.74 {\tiny{± 0.54}}}} & \textbf{\textcolor{customcyan}{75.18 {\tiny{± 0.15}}}} & \textbf{\textcolor{tealblue!90}{80.74 {\tiny{± 0.30}}}} & \textbf{\textcolor{darkorange!90}{93.98 {\tiny{± 0.26}}}} & 96.21 {\tiny{± 0.11}} & 95.89 {\tiny{± 0.06}} & \textbf{\textcolor{customcyan}{97.44 {\tiny{± 0.14}}}} & 80.31 {\tiny{± 0.44}} \\
        \textbf{SAGE+TS} & 85.28 {\tiny{± 1.11}} & \textbf{\textcolor{tealblue!90}{74.96 {\tiny{± 0.24}}}} & 79.72 {\tiny{± 0.69}} & 93.43 {\tiny{± 0.11}} & 96.51 {\tiny{± 0.23}} & \textbf{\textcolor{darkorange!90}{96.24 {\tiny{± 0.10}}}} & 97.23 {\tiny{± 0.00}} & \textbf{\textcolor{tealblue!90}{81.10 {\tiny{± 0.32}}}} \\
        \rowcolor{gray!20}
        \textbf{GAT+TS} & 84.62 {\tiny{± 0.89}} & \textbf{\textcolor{darkorange!90}{74.84 {\tiny{± 0.48}}}} & 78.84 {\tiny{± 0.67}} & 93.86 {\tiny{± 0.13}} & \textbf{\textcolor{darkorange!90}{96.60 {\tiny{± 0.07}}}} & 96.17 {\tiny{± 0.02}} & \textbf{\textcolor{darkorange!90}{97.38 {\tiny{± 0.04}}}} & \textbf{\textcolor{customcyan}{81.78 {\tiny{± 0.22}}}} \\
        \bottomrule
    \end{tabular}
    }
    \label{tab:homo_results}
\end{table*}

\begin{table*}[h]
    \centering
    \caption{Node classification results over heterophilous graphs (\%). Cells are left blank where the original paper used a different evaluation protocol than ours. The top \textbf{\textcolor{customcyan}{$\mathbf{1^{st}}$}}, \textbf{\textcolor{tealblue!90}{$\mathbf{2^{nd}}$}} and \textbf{\textcolor{darkorange!90}{$\mathbf{3^{rd}}$}} results are highlighted.}
    \setlength\tabcolsep{4pt}
    \resizebox{\linewidth}{!}{
    \begin{tabular}{l|lllll}
        \toprule
            & Squirrel & Chameleon & Amazon-Ratings & Roman-Empire & Minesweeper \\
        \midrule
        GraphGPS~\cite{graphgps} & 39.81 {\tiny{± 2.28}} & 41.55 {\tiny{± 3.91}} & 53.27 {\tiny{± 0.66}} & 82.72 {\tiny{± 0.68}} & 90.75 {\tiny{± 0.89}} \\
        \rowcolor{gray!20}
        NodeFormer~\cite{nodeformer} & 38.89 {\tiny{± 2.67}} & 36.38 {\tiny{± 3.85}} & 43.79 {\tiny{± 0.57}} & 74.83 {\tiny{± 0.81}} & 87.71 {\tiny{± 0.69}}  \\
        SGFormer~\cite{sgformer} & 42.65 {\tiny{± 2.41}} & 45.21 {\tiny{± 3.72}} & 54.14 {\tiny{± 0.62}} & 80.01 {\tiny{± 0.44}} & 91.42 {\tiny{± 0.41}} \\
        \rowcolor{gray!20}
        Polynormer~\cite{polynormer} & 41.97 {\tiny{± 2.14}} & 41.97 {\tiny{± 3.18}} & 54.96 {\tiny{± 0.22}} & \textbf{\textcolor{darkorange!90}{92.66 {\tiny{± 0.60}}}} & 97.49 {\tiny{± 0.48}} \\
        MLP & 39.30 {\tiny{± 0.79}} & 43.86 {\tiny{± 5.23}} & 48.85 {\tiny{± 0.55}} & 66.10 {\tiny{± 0.44}} & 51.06 {\tiny{± 1.76}} \\
        \rowcolor{gray!20}
        GCN+ReP~\cite{gcn-rep} & \textbf{\textcolor{customcyan}{45.89 {\tiny{± 1.45}}}} & \textbf{\textcolor{customcyan}{47.57 {\tiny{± 3.90}}}} & 52.75 {\tiny{± 0.62}} & 86.43 {\tiny{± 0.74}} & 96.05 {\tiny{± 0.19}} \\
        CoED~\cite{coed} & \textbf{\textcolor{darkorange!90}{45.50 {\tiny{± 1.62}}}} & \textbf{\textcolor{tealblue!90}{47.27 {\tiny{± 3.62}}}} &  & 92.17 {\tiny{± 0.29}} &  \\
        \rowcolor{gray!20}
        M3Dphormer~\cite{m3dphormer} & 44.34 {\tiny{± 1.94}} & \textbf{\textcolor{darkorange!90}{47.09 {\tiny{± 4.05}}}} &  &  & \textbf{\textcolor{darkorange!90}{98.27 {\tiny{± 0.20}}}}  \\
        GraphTARIF~\cite{graph-tarif} & \textbf{\textcolor{tealblue!90}{45.58 {\tiny{± 1.91}}}} &  & \textbf{\textcolor{tealblue!90}{55.86 {\tiny{± 0.42}}}} & \textbf{\textcolor{tealblue!90}{93.23 {\tiny{± 0.38}}}} & \textbf{\textcolor{customcyan}{99.03 {\tiny{± 0.19}}}} \\
        \rowcolor{gray!20}
        BuNN~\cite{bunn} &  &  & 53.74 {\tiny{± 0.51}} & 91.75 {\tiny{± 0.39}} & \textbf{\textcolor{tealblue!90}{98.99 {\tiny{± 0.16}}}} \\
        Dir-Poly~\cite{dir-poly} &  &  & 50.73 {\tiny{± 0.56}} & \textbf{\textcolor{customcyan}{94.51 {\tiny{± 0.22}}}} & 93.74 {\tiny{± 0.70}} \\
        \rowcolor{gray!20}
        CPGNN~\cite{cpgnn} & 30.04 {\tiny{± 2.03}} & 33.00 {\tiny{± 3.15}} & 39.79 {\tiny{± 0.77}} & 63.96 {\tiny{± 0.62}} & 52.03 {\tiny{± 5.46}} \\
        FSGNN~\cite{fsgnn} & 35.92 {\tiny{± 1.32}} & 40.61 {\tiny{± 2.97}} & 52.74 {\tiny{± 0.83}} & 79.92 {\tiny{± 0.56}} & 90.08 {\tiny{± 0.70}} \\
        \rowcolor{gray!20}
        GloGNN~\cite{glognn} & 35.11 {\tiny{± 1.24}} & 25.90 {\tiny{± 3.58}} & 36.89 {\tiny{± 0.14}} & 59.63 {\tiny{± 0.69}} & 51.08 {\tiny{± 1.23}} \\
        GPRGNN~\cite{gprgnn} & 38.95 {\tiny{± 1.99}} & 39.93 {\tiny{± 3.30}} & 44.88 {\tiny{± 0.34}} & 64.85 {\tiny{± 0.27}} & 86.24 {\tiny{± 0.61}} \\
        \rowcolor{gray!20}
        H2GCN~\cite{h2gcn} & 35.10 {\tiny{± 1.15}} & 26.75 {\tiny{± 3.64}} & 36.47 {\tiny{± 0.23}} & 60.11 {\tiny{± 0.52}} & 89.71 {\tiny{± 0.31}} \\
        GCN~\cite{kipf2017semisupervisedclassificationgraphconvolutional} & 43.75 {\tiny{± 1.91}} & 45.30 {\tiny{± 2.30}} & 53.64 {\tiny{± 0.54}} & 91.15 {\tiny{± 0.20}} & 97.26 {\tiny{± 0.22}} \\
        \rowcolor{gray!20}
        GAT~\cite{gat} & 41.51 {\tiny{± 2.34}} & 43.07 {\tiny{± 5.25}} & 55.09 {\tiny{± 0.19}} & 90.49 {\tiny{± 0.22}} & 97.86 {\tiny{± 0.37}} \\
        SAGE~\cite{hamilton2018inductiverepresentationlearninglarge} & 40.48 {\tiny{± 2.90}} & 44.32 {\tiny{± 4.55}} & 55.18 {\tiny{± 0.93}} & 90.50 {\tiny{± 0.21}} & 97.09 {\tiny{± 1.00}} \\
        \midrule
        \rowcolor{gray!20}
        \textbf{MLP+TS} & 39.39 {\tiny{± 0.91}} & 43.86 {\tiny{± 5.23}} & 49.50 {\tiny{± 0.27}} & 66.12 {\tiny{± 0.29}} & 50.97 {\tiny{± 1.56}} \\
        \textbf{GCN+TS} & 44.57 {\tiny{± 2.04}} & 45.27 {\tiny{± 4.74}} & 54.06 {\tiny{± 0.58}} & 91.66 {\tiny{± 0.20}} & 97.80 {\tiny{± 0.20}} \\
        \rowcolor{gray!20}
        \textbf{GAT+TS} & 40.36 {\tiny{± 1.66}} & 44.52 {\tiny{± 3.69}} & \textbf{\textcolor{darkorange!90}{55.73 {\tiny{± 0.31}}}} & 90.93 {\tiny{± 0.20}} & 97.86 {\tiny{± 0.37}}  \\
        \textbf{SAGE+TS} & 41.32 {\tiny{± 2.35}} & 43.32 {\tiny{± 4.67}} & \textbf{\textcolor{customcyan}{56.72 {\tiny{± 0.35}}}} & 91.27 {\tiny{± 0.36}} & 97.33 {\tiny{± 0.94}} \\
        \bottomrule
    \end{tabular}
    }
    \label{tab:hetero_results}
\end{table*}

\section{Proof of Lemma~\ref{lemma:cross-entropy}}
\label{app:proof_cross_entropy}

\begin{proof}
By definition, the cross-entropy between a target distribution $y$ and a prediction $p$ is
\[
\mathcal{L}_{\mathrm{CE}}(y,p)
=
-\sum_{i=1}^C y_i \log p_i .
\]
Adding and subtracting $\sum_{i=1}^C p_i \log p_i$ gives
\[
\mathcal{L}_{\mathrm{CE}}(y,p)
=
-\sum_{i=1}^C p_i \log p_i
+
\sum_{i=1}^C p_i \log p_i
-
\sum_{i=1}^C y_i \log p_i .
\]
The first term is the Shannon entropy $H(p)$. Combining the remaining terms yields
\[
\mathcal{L}_{\mathrm{CE}}(y,p)
=
H(p)
+
\sum_{i=1}^C (p_i-y_i)\log p_i ,
\]
which proves the claim.
\end{proof}

\section{Broader Impacts}
\label{app:broader_impacts}

This paper presents work whose goal is to advance the field of Machine Learning. The method is foundational graph-learning research with no direct high-risk application.

\end{document}